\newcommand{\PP}{\mathbb{P}}
\newtheorem{thm}{Theorem}
\newtheorem{lemma}{Lemma}
\begin{document}


\author{
\IEEEauthorblockN{George D. Monta\~nez}
\IEEEauthorblockA{Machine Learning Department\\
Carnegie Mellon University\\
Pittsburgh, PA USA\\
\texttt{gmontane@cs.cmu.edu}}
\and
\IEEEauthorblockN{Cosma Rohilla Shalizi}
\IEEEauthorblockA{Statistics Department\\
Carnegie Mellon University\\
Pittsburgh, PA USA\\
\texttt{cshalizi@cmu.edu}}
}

\title{The LICORS Cabinet: Nonparametric Light Cone Methods for Spatio-Temporal Modeling}

\maketitle

\begin{abstract}
  Spatio-temporal data is intrinsically high dimensional, so unsupervised
  modeling is only feasible if we can exploit structure in the process.  When
  the dynamics are local in both space and time, this structure can be exploited by
  splitting the global field into many lower-dimensional ``light cones''.  We
  review light cone decompositions for predictive state reconstruction,
  introducing three simple light cone algorithms. These methods allow for
  tractable inference of spatio-temporal data, such as full-frame video.  The
  algorithms make few assumptions on the underlying process yet
  have good predictive performance and can provide distributions over
  spatio-temporal data, enabling sophisticated probabilistic inference.
\end{abstract}

\section{Introduction}

Modeling spatio-temporal data, such as high resolution video, is hard. The sheer dimensionality of the data often makes global
inference methods difficult.  Similarly curses of dimensionality for textual
and time-series data have been met with great success by HMMs
\citep{rabiner1989tutorial}, using localized models for prediction and tractable
inference on sequences.  Inspired by this example, we look to localized models
for modeling of spatio-temporal data, like video and fMRI data.  \emph{Light
  cone} methods, such as mixed LICORS \citep{goerg2013mixed}, successfully
reduce the global inference task to iterating a tractable, localized one. These
methods can be used for both regression (point predictions of
$\mathbb{R}^d$-valued outputs from input variables) and computing probability
densities.  The latter property allows one to tractably compute distributions
over spaces of events, e.g., over the space of all possible videos, $V^*$, just
as HMMs induce probability distributions over the set $\Sigma^*$ of all
possible sequences (Figure~\ref{fig:video}). This ability could make light cone
decompositions as general and useful for modeling spatio-temporal data as HMMs
are for textual and time-series data.

\begin{figure}[htbp]
\centering
\includegraphics[width=0.42\linewidth]{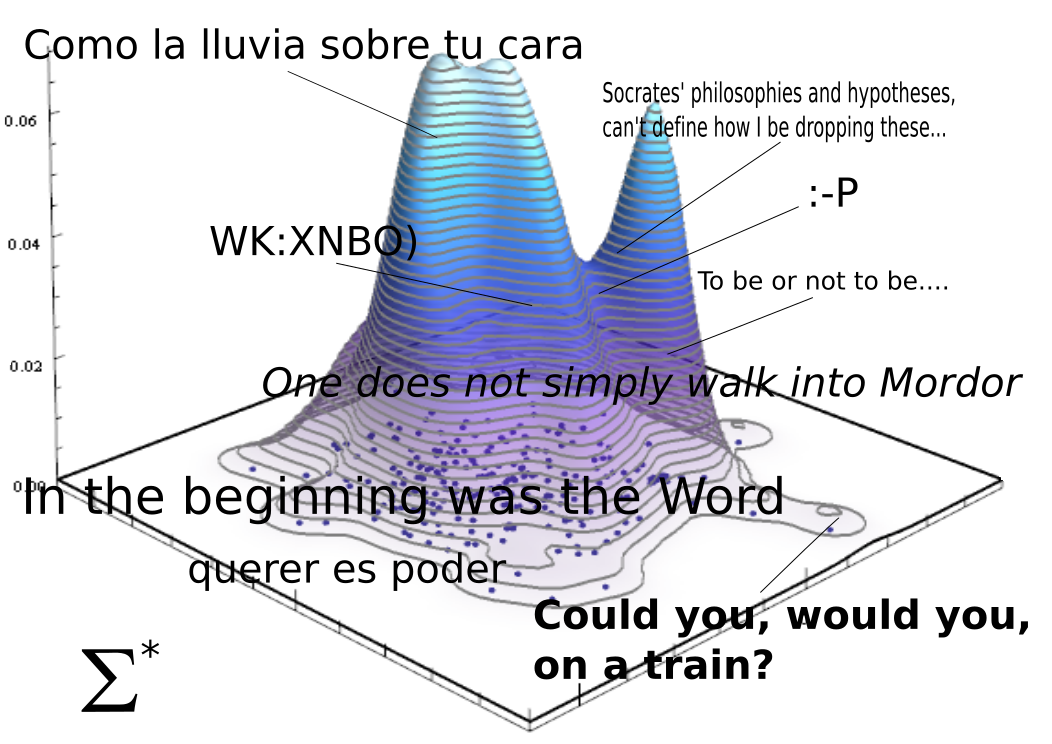}
\hspace{1em}
\includegraphics[width=0.42\linewidth]{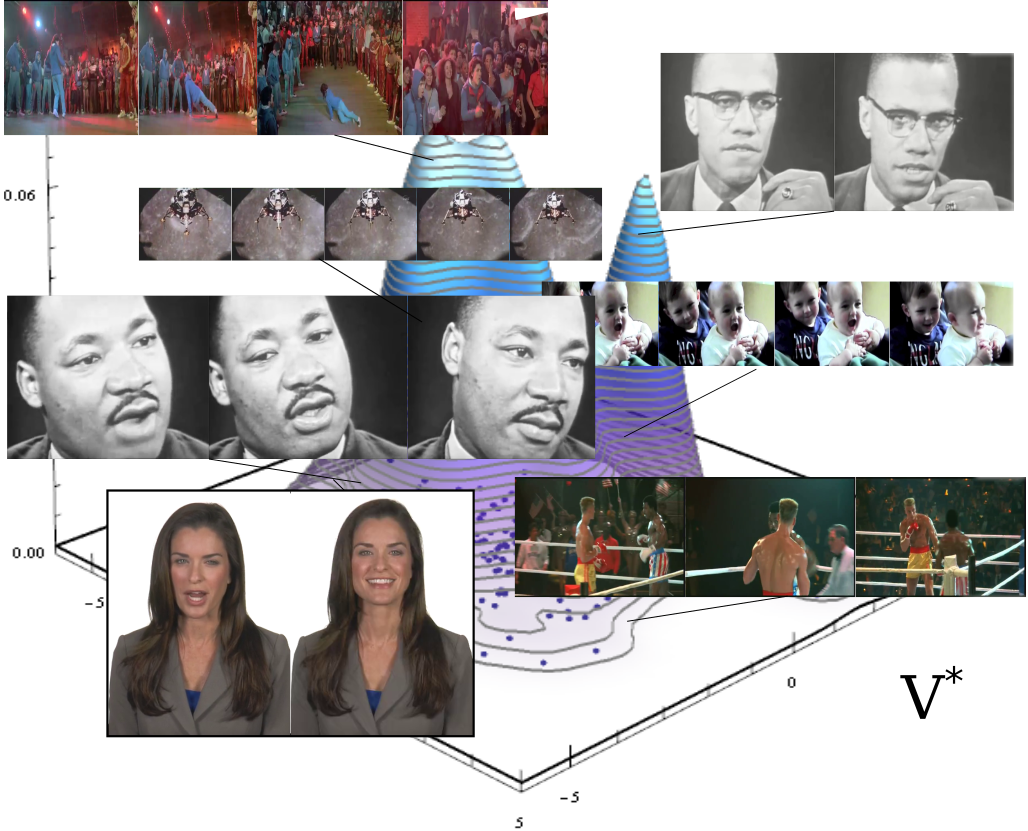}
\caption{Probability densities over the space of all strings, $\Sigma^*$, and the space of all videos, $V^*$.}
\label{fig:video}
\end{figure}

The goals of this manuscript are thus: (1) Showing how light cone decompositions help make spatio-temporal modeling tasks tractable; (2) Introducing three easy-to-implement light cone algorithms, allowing others to begin experimenting with light cone methods; (3) Assessing the predictive accuracy of light cones methods on two video prediction tasks; and (4) Providing a finite sample guarantee on the error of predictive state light cone methods. We begin with some preliminaries.

\section{Notation and Preliminaries}

Given a random field $X(\mathbf{r}, t)$, observed for each point $\mathbf{r}$
on a regular spatial lattice $\mathbf{S}$ at discrete time instants $t =
1,\ldots,T$, we seek to approximate a joint likelihood over the observations of
the spatio-temporal process, and to accurately forecast the future of the
process. Since causal influences in physical systems only propagate at finite
speed (denoted $c$), we follow \citet{Parlitz-Merkwirth-local-states} and adopt
the concept of \emph{light cones}, which are defined as the set of events that
could influence $(\mathbf{r}, t)$. Formally, a past light cone (PLC) is the set
of all past variables\footnote{Strictly, we should distinguish the light
  cone proper, which is a region of space-time, from the configuration of the
  random field over this region.  We elide the distinction for brevity.} that
could have affected $X(\mathbf{r}, t)$:
\[
    \ell^-(\mathbf{r}, t) := \{X(\mathbf{u}, s) \mid s \leq t, ||\mathbf{r} -\mathbf{u}||_2 \leq c \cdot (t-s) \}.
\]
Similarly, a future light cone (FLC) is the set of all future events that could be affected by $(\mathbf{r}, t)$. As a practical matter, not all past (or future) events are equally informative, since more recent events tend to exert greater causal influence. Thus, in practice, we can approximate the true past light cone with a much smaller subset light cone, improving tractability without incurring much predictive error.

Furthermore, we adopt the conditional independence assumption for light cones given in \citet{goerg2013mixed}, which allows for the factorization of the joint likelihood into the product of conditional likelihoods. Indexing each $X(\mathbf{r}, t)$ by a single integer $i = 1,\ldots,N$ for simplicity of notation, the joint pdf of $X_1, \ldots, X_N$ factorizes as 
\[
    P(X_1, \ldots, X_N) \propto \prod_{i=1}^{N}P(X_i | \ell^-_i), 
\]
where the proportionality accounts for incompletely observed light cones along the edge of the field.

Given this factorization, it becomes natural to seek equivalence classes of
light cones, namely, i.e., to cluster light cones into sets based on the
similarity of their conditional distributions.  Such equivalence classes of
past light cones are \emph{predictive states} \citep{Knight-predictive-view,
  goerg2012licors}, and our immediate goals become twofold: first, to discover
these latent predictive states (i.e., learn a mapping $\epsilon$ from PLCs to
predictive states), and second, to estimate the conditional distribution over
$X$ given its predictive state. \citet{goerg2012licors} introduced
\emph{LICORS} as a nonparametric method of predictive state reconstruction,
followed by \emph{mixed LICORS} \citeyearpar{goerg2013mixed} as a mixture model
extension of LICORS, where each future light cone is forecast using a mixture
of extremal predictive states. Mixed LICORS has predictive advantages over the
original LICORS, but requires finding an $N \times K$ matrix of weights (where
$N$ is the number of light cones and $K$ the number of predictive states) using
a form of EM, where each weight is determined using a kernel density estimate
on all points.  Each EM iteration takes $O(N^2 K)$ steps, slowing mixed LICORS
considerably for large $N$.  Almost equally daunting, the original algorithms
are quite complex, difficult to implement and debug, inhibiting their adoption.

\section{Contributions}

We review the use of light cones for localized spatio-temporal prediction. We
introduce two simplified nonparametric methods for the predictive state
reconstruction task and a simple regression light cone method for fast and
accurate forecasting. The first predictive state method, \textbf{Moonshine}, is
a simple meta-algorithm consisting of basic clustering steps combined with
dimensionality reduction and nonparametric density estimation. Moonshine is
instance-based and requires no iterative likelihood maximization, yet retains
many of the qualities of the more complex mixed LICORS method.  The second
predictive state algorithm, \textbf{One Hundred Proof (OHP)}, simplifies the
Moonshine approach further and consists of clustering in the space of future
light cones, using the clusters to obtain state-specific nonparametric density
estimates over the space of PLCs and FLCs. These simple algorithms are much
easier to implement than the LICORS algorithms, being simplified approximations of 
the mixed LICORS system, yet retain many of their forecasting and modeling strengths.

We further conduct two sets of empirical experiments showing the predictive
power of light cone methods for predicting video-like data, and report
results. Lastly, we give a large sample theoretical guarantee for light cone
predictive state systems.

The remainder is structured as follows. \S \ref{sec:methods} describes the Moonshine, One Hundred Proof and light cone
linear regression algorithms.  \S \ref{sec:results} describes the experimental
setup for two real-world spatio-temporal prediction tasks, and gives the
results of the algorithms and baselines. \S \ref{sec:theory} gives an upper bound on the estimation error of our methods. \S \ref{sec:RELATED-FUTURE} reviews related and future work, while \S
\ref{sec:CONCLUSION} summarizes our findings.

\section{Methods}
\label{sec:methods}

Our simple predictive state reconstruction methods build upon the principles
introduced in \citet{goerg2013mixed} for mixed LICORS.  Both new methods
reconstruct a set of predictive states and a soft mapping $\epsilon$ from past
light cones to states, through use of nonparametric density estimation over the
space of light cones.  That is, for all past light cones $\ell^-$ the methods
compute
\[
    \epsilon(\ell^-) = [w_1, w_2, \ldots, w_k]^\top,
\]
where $w_j$ is the normalized weight of state $S_j$ for light cone $\ell^-$. Unlike mixed LICORS, the new methods avoid having to explicitly construct an $N \times K$ matrix, yet retain the benefits of soft membership mixture modeling.

After describing the reconstruction algorithms, we discuss how one can
determine the conditional probability density of an observation given its past
light cone, and how to use this conditional density in forecasts.  We then
describe an additional pure regression light cone method, useful for fast and
accurate forecasting without state reconstruction. Appendix 2 describes
parameter settings and practical implementation issues that arise when using
the algorithms.

\subsection{Moonshine}

\begin{figure}[htbp]
\centering
\includegraphics[width=1.\linewidth]{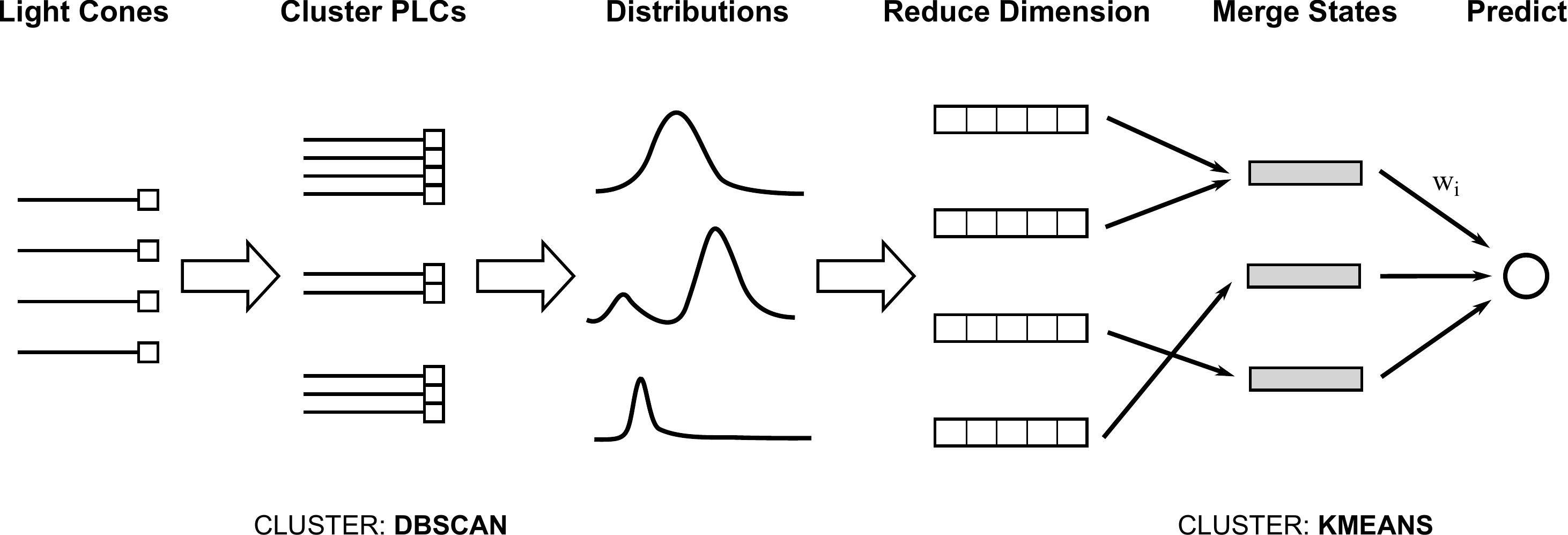}
\caption{Component stages of the Moonshine algorithm.}
\label{fig:main}
\end{figure}

\begin{algorithm}
\caption{Moonshine}
\label{alg:Moonshine}
\small
\begin{algorithmic}[1]
\STATE Decompose spatio-temporal process into light cone (PLC, FLC) observation tuples.
\STATE Cluster PLCs using density based clustering.
\STATE Compute cluster-conditioned density estimates for $2K+1$ random points.
\IF{number of clusters $>$ maximum number}
    \STATE Merge clusters in the space of reduced dimension.
\ENDIF
\STATE Map original light cones to final clusters.
\end{algorithmic}
\end{algorithm}

Moonshine begins by decomposing the random field into its component light
cones, shown at far left in Figure~\ref{fig:main}. The algorithm then proceeds
through two successive stages of clustering, separated by a dimension-reduction
step. The main steps of Moonshine are given in Algorithm~\ref{alg:Moonshine}.

The output of the procedure is a set of predictive states, each of which
consists of a set of PLCs and FLCs. The predictive states are used to create a
pair of nonparametric density estimates, one over PLCs and one over FLCs, which
jointly identify each state.

\textbf{Initial Clustering:} For the first clustering step, Moonshine uses a
density-based clustering approach \citep{DBSCAN} to cluster the light cones in
the space of PLCs, which assumes that similar PLCs have similar predictive
consequences. Such clustering methods need a specified local-neighborhood size,
so we begin with small neighborhoods, progressively increase until 90\% of all
points are clustered, and assign the remaining points to the nearest cluster
center (effectively hybridizing density-based clustering with $k$-means). This
allows for good coverage while avoiding formation of a single, all-encompassing
cluster.  (Alternative clustering algorithms, e.g., \cite{zahn1971graph,982897,zhao2015sof}, would also work.)

\textbf{Density Estimation and Dimensionality Reduction:} The FLCs associated with each
cluster (mapped through their respective light cones) are used to form kernel density
estimates over the space of FLCs. In other words, each cluster consists of some set of
associated FLCs and these FLCs are then used to estimate densities over the FLC space.
We estimate the densities of $2K + 1$ randomly selected points, where $K$ is a
parameter that affects the degree of dimensionality reduction. The
log-probability ratio is taken between the first point and the remaining
$2K$ points. This vector of log probability ratios forms the ``signature'' of
the cluster, following the construction of a canonical sufficient statistic for
exponential family distributions \citep[p.\ 123]{Kulhavy-recursive}.

\textbf{Merging Clusters:} If the number of clusters is greater than the
maximum number of predictive states specified for the model, we cluster again
to reduce the number.  We cluster the low-dimensional signature vectors with
$k$-means++ \citep{arthur2007k}, to form the final predictive states.  The
original light cones are then assigned to the resulting states, so each
predictive state has a unique set of PLCs and FLCs with which to form
nonparametric density estimates over both the PLC and FLC spaces.

\subsection{One Hundred Proof (OHP)}

\begin{figure}[htbp]
\centering
\includegraphics[width=0.85\linewidth]{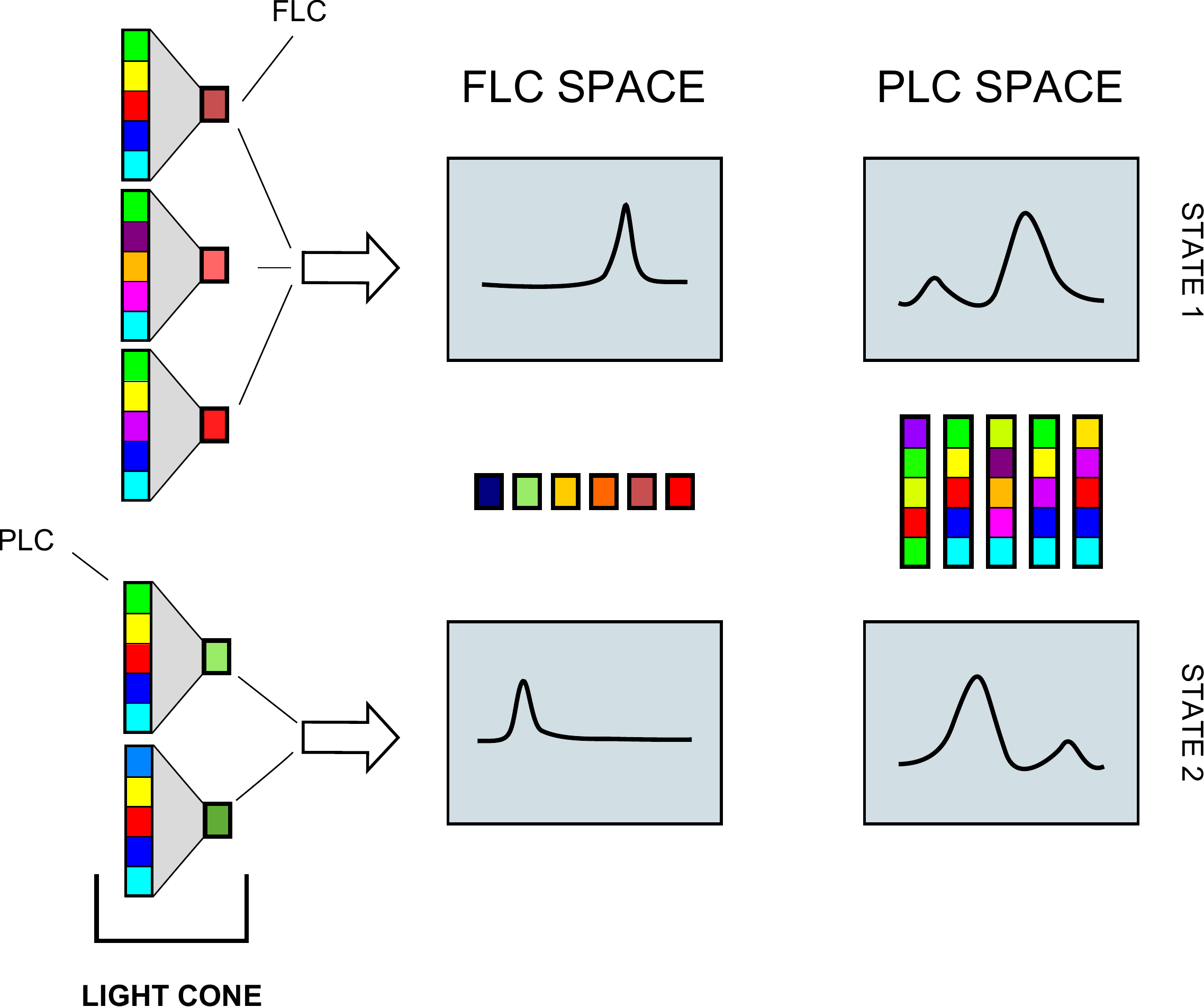}
\caption{The One Hundred Proof algorithm's input and output. Light cones are
  input, clustered using the FLCs, which results in density estimates for PLCs
  and FLCs for each state. Densities are drawn as one-dimensional for
  simplicity, but are typically multi-dimensional, continuous objects.}
\label{fig:OHP}
\end{figure}

\begin{algorithm}
\caption{One Hundred Proof}
\label{alg:OHP}
\small
\begin{algorithmic}[1]
\STATE Decompose spatio-temporal process into light cone (PLC, FLC) observation pairs.
\STATE Cluster FLCs using $k$-means++ clustering.
\STATE Map original light cone pairs to final clusters.
\end{algorithmic}
\end{algorithm}

OHP simplifies Moonshine, with a single clustering step and subsequent mapping
of light cones to clusters. The main difference is the space in which the
clustering occurs: Moonshine clusters in the space of PLCs, but OHP clusters in
the space of FLCs.  Clustering in FLC space effectively groups past light cones
by their predictive consequences, learning a geometry of our space where points
with similar futures are ``near'' each other regardless of differences in their
histories. This results in predictive states with expected
near-minimal-variance future distributions \citep{arthur2007k}, such that once
we are sure of which state a new PLC maps to, we are highly certain of what
outcome the state will generate.

To motivate this choice, imagine that all pasts map to some small set of distinct
futures, such as to the letters of a discrete finite alphabet. Given input past $\ell^-$ we want to
estimate a probability function over output $X$, so one way to do this is to group all
occurrences of future $X=x$, and use that cluster to estimate the distribution, using Bayes'
Theorem, namely,
\[
    P(X = x \mid \ell^-) \propto P(\ell^- | X = x) P(X = x).
\]
Using nonparametric density estimation over the points observed with outcome $x$, we can estimate the first quantity on the
right hand side, and taking the normalized number of member outcomes allows one to estimate the second.
This example can easily extend to continuous quantities, by clustering in the space of observed future
outcomes and substituting predictive states for the finite alphabet, which is the motivation for the OHP algorithm.

The two steps of OHP are (Algorithm \ref{alg:OHP}):
\begin{enumerate}
\item \textbf{Cluster FLCs:} After decomposing our spatio-temporal process into light cones, we cluster the FLCs using $k$-means++. The number of clusters (which will become the number of predictive states) is a user-defined parameter.
\item \textbf{Map Light Cones:} We then map the original light cones to our clusters, and produce our final predictive states, which consist of unique sets of PLCs and FLCs.
\end{enumerate}

As in the case of Moonshine, the FLCs and PLCs for each state $S_j$ are used to compute nonparametric density estimates over the space of FLCs and PLCs, providing estimators for $P(X|S_j)$ and $P(\ell^{-}| S_j)$ respectively. Algorithm~\ref{alg:OHP} outlines the process of state reconstruction for OHP.

\subsection{Predictive Distributions for Light Cone Systems}

%

Given the states reconstructed by Moonshine or OHP, we can estimate predictive
distributions as follows.  The conditional probability (or probability density)
of $X$ given PLC $\ell^-$ is obtained by mixing over the predictive states,
namely
\begin{align}
    P(X|\ell^-) &= \sum_{j=1}^{K} P(X, S_j | \ell^-) \\
                &= \sum_{j=1}^{K} P(X|S_j)P(S_j | \ell^{-}) \label{eq:dist-over-x}
\end{align}
where the second equality follows from the conditional independence of $X$ and $\ell^{-}$ given the predictive state $S_j$. The $P(S_j | \ell^{-})$ terms serve as the mixture weights, and Bayes's Theorem yields
\begin{align}
    P(S_j | \ell^{-}) &= \frac{P(\ell^{-}| S_j)P(S_j)}{P(\ell^{-})} \\
                      &= \frac{P(\ell^{-}| S_j)P(S_j)}{\sum_{k=1}^{K}P(\ell^{-}| S_k)P(S_k)}.\label{eq:conditional-of-state}
\end{align}
All of the quantities in (\ref{eq:dist-over-x}) and
(\ref{eq:conditional-of-state}) can be estimated using our reconstructed
predictive states, which are each associated with unique sets of PLCs and
FLCs. We estimate $P(S_j)$ by $N_j / N$, where $N$ is the total number of light
cone observations and $N_j$ is the number of light cones assigned to state
$S_j$. The two state-conditioned densities $P(X|S_j)$ and $P(\ell^{-}| S_j)$
are estimated using nonparametric density estimation techniques (such as kernel
density estimation) based on their associated FLCs and PLCs.  Thus we get
\begin{align}
    \widehat{P}(X|\ell^-) &= \sum_{j=1}^{K} \left(\frac{N_j \widehat{P}(\ell^{-}| S_j)}{\sum_{k=1}^{K}N_k \widehat{P}(\ell^{-}| S_k)}\right) \widehat{P}(X|S_j)
\end{align}
where $\widehat{P}(X| S_k)$ and $\widehat{P}(\ell^{-}| S_k)$ denote the
nonparametric density estimates of the two corresponding conditional densities.

When we need a point prediction of $X$, we use the conditional mean:
\begin{align}
    \mathbb{E}\left[X | \ell^{-} \right] &= \mathbb{E}\left[\mathbb{E}\left[ X | \ell^{-}, S\right] \ell^{-} \right] \\
            &= \mathbb{E}\left[\mathbb{E}\left[ X | S\right] \ell^{-} \right] \\
            &= \sum_{j=1}^{K}{P(S_j | \ell^{-}) \mathbb{E}[X|S_j]}.
\end{align}
Replacing $P(S_j | \ell^{-})$ with (\ref{eq:conditional-of-state}), plugging in
the estimated densities and probabilities, and using the mean future value for
state $S_j$ (denoted $\overline{x}_{j}$) to estimate $\mathbb{E}[X|S_j]$, we
obtain the final prediction rule
\begin{align}
    X^* &= \sum_{j=1}^{K} \left(\frac{N_j \widehat{P}(\ell^{-}| S_j)}{\sum_{k=1}^{K}N_k \widehat{P}(\ell^{-}| S_k)}\right) \overline{x}_{j} \\
        &= \sum_{j=1}^{K}w_j(\ell^-) \overline{x}_{j},
\end{align}
which is simply a suitably weighted mixture of the mean predictions for each state.

\subsection{Light Cone Linear Regression}

If only predictive regression is needed and not a full generative model, one can perform linear regression directly using light cones. Light cone linear regression uses the same light cone decomposition as the LICORS, Moonshine and OHP methods, but learns a regression rule directly from past light cones to future light cone values. This has the advantages of extremely fast prediction and good forecasting accuracy, along with simple implementation. We evaluate the performance of light cone linear regression on two real-world forecasting tasks, in \S \ref{sec:results}.

\section{Experimental Setup}
\label{sec:results}

In order to evaluate the effectiveness of light cone methods, we attempt spatio-temporal forecasting on real-world
data.

\subsection{Forecasting Task 1: Electrostatic Potentials}
For the first task, the data come from a set of experiments measuring
electrostatic potential changes in organic electronic materials
\citep{hoffmann2013asymmetric}.\footnote{Specifically, the data were collected
  using kelvin force probe microscopy to measure spatio-temporal changes in
  electrostatic charge regions on the surface of poly(3-hexyl)thiophene film.}
We learn a common set of predictive states across experiments, and do
frame-by-frame prediction on a single held-out experiment, effectively
cross-validating across experiments.

Each experiment consists of 7--10 time slices, or \emph{frames}.  Each frame is
a 256-by-256 matrix of scalar measurements, which we call \emph{pixels}, since
the data resembles video in structure.  Predictions are performed for
254-by-254 pixels in each frame after the first, which allows for each pixel to
be predicted based on a full light cone, thus excluding marginal light cones.

\subsection{Forecasting Task 2: Human Speaker Video}

For the second task, we predict the next frame of a full-resolution video from a recording of a human speaker, used in generating an intelligent avatar agent.\footnote{Used with permission from GetAbby (True Image Interactive, LLC).} In this task, we perform leave-one-frame-out predictions, cross-validating across video frames. Each frame consists of 440-by-330 pixels, of which predictions are performed on the 428-by-328 interior pixels, again excluding marginal light cones. Every fifth frame from the video is retained, and light cones are extracted from roughly one hundred skip frames. Forty-thousand light cones are subsampled for tractability. These light cones are used for cross-validation.

\subsection{Comparison Methods and Parameter Settings}

We compare the performance of predictive state reconstruction and forecasting systems with some simple baseline methods. For all light cone methods, the same set of light cones were extracted from the data, with $h_p = 1$, $h_f = 0$, and $c = 1$, resulting in PLCs of dimension $d=9$ and FLCs with dimension $d=1$. We evaluate the performance of the mixed LICORS system, implemented by the authors following \citet{goerg2013mixed}. For tractability, only twenty thousand light cones were used in training each fold for the first task, and forty thousand for the second. Kernel density estimators were used for both PLC density estimation as well as FLC density estimation, to improve predictive performance. Initialization was performed using $k$-means++ and the iteration delta was set to $0.0019$. For light cone linear regression, we use linear regression implemented in the scikit-learn package for Python \citep{scikit-learn}, version 15.2.

The simplest method we compare against is the ``predict the value from the last
frame'' method that simply takes the previous value of a pixel and uses that as
the prediction for the pixel in the current frame. The $k$-nearest neighbor
regressor takes as input a past light cone and finds the $k$-nearest PLCs in
Euclidean space, then takes the weighted average of their individual future
light cone values and outputs that as the current prediction.  Below, we report
results from the scikit-learn implementation of KNeighborsRegressor with
default parameter settings.

\subsection{Performance Metrics}

We compared performance in terms of mean-squared-error (MSE) and correlation
(Pearson $\rho$) with the ground truth. Additionally, for the three
distributional methods (mixed LICORS, Moonshine and OHP) we measured the
average per pixel log-likelihood (Avg.\ LL) of the predictions, an estimate of
the (negative) cross-entropy between the model and the truth, and the
perplexity ($2^{-\text{Avg\ LL}}$), with lower perplexity being better.  For
the distributional methods, we tested performance both for a large maximum
number of states ($K_{\text{max}} = 100$) and a small number of states ($K_{\text{max}} = 10$).

To avoid negative infinities appearing when model likelihoods are sufficiently
close to zero, we apply smoothing to the three distributional models for all
likelihood estimates mapping to zero, converting them to likelihoods of
$10^{-300}$.

\subsection{Qualitative Results}
\begin{figure*}[htbp]
\centering
\includegraphics[width=1.\linewidth]{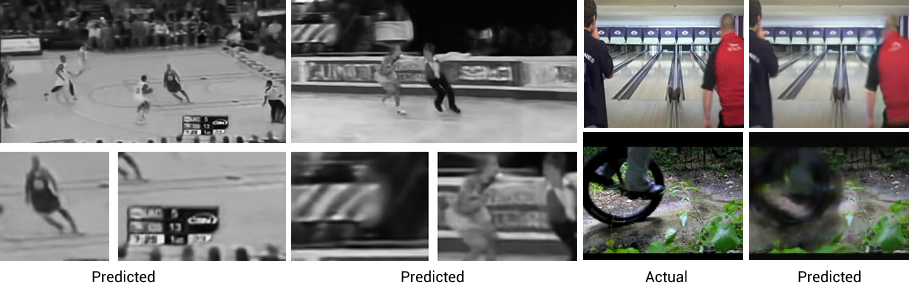}
\caption{Prediction examples of \citet{mathieu2015deep}}
\label{fig:comparison}
\end{figure*}

\begin{figure*}[htbp]
\centering
\includegraphics[width=1.\linewidth]{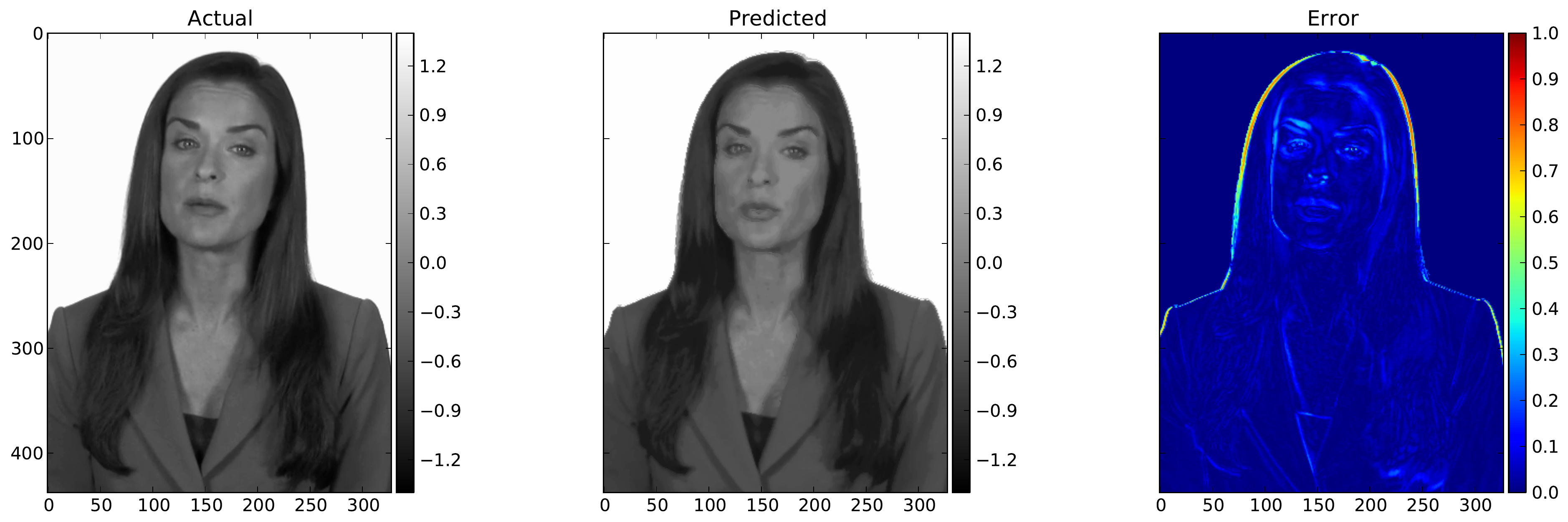}
\includegraphics[width=1.\linewidth]{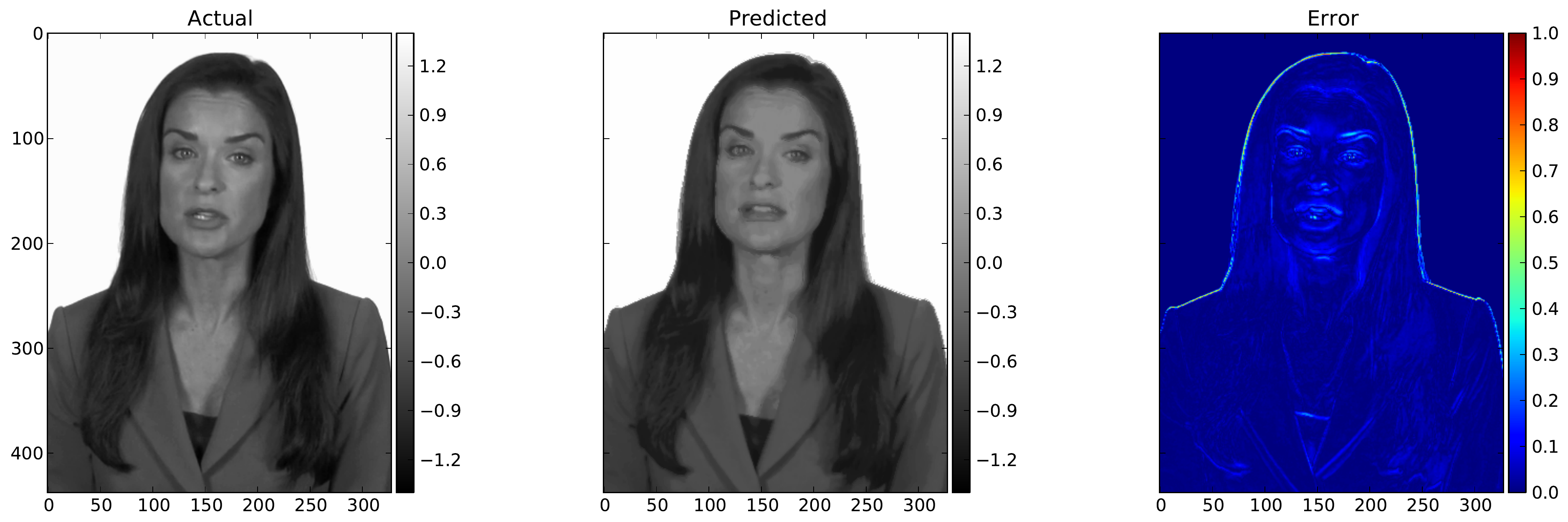}
\caption{Predicting video with mixed LICORS light cone system.}
\label{fig:abby-pred}
\end{figure*}

\begin{figure}[htbp]
\centering
\includegraphics[width=1.\linewidth]{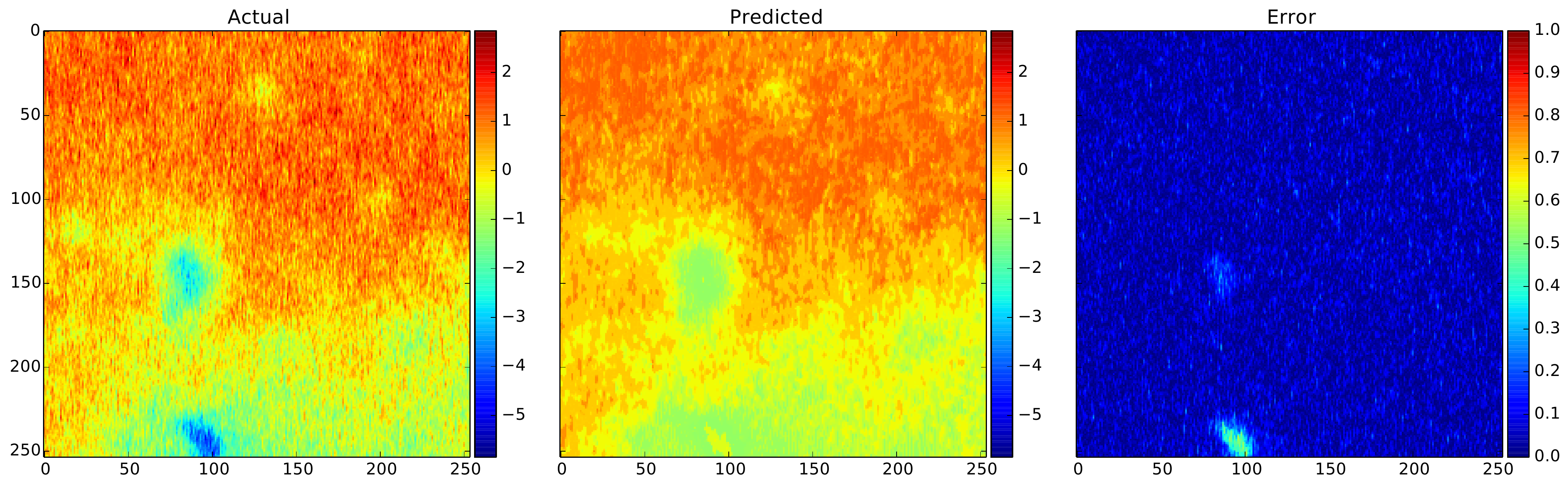}
\includegraphics[width=1.\linewidth]{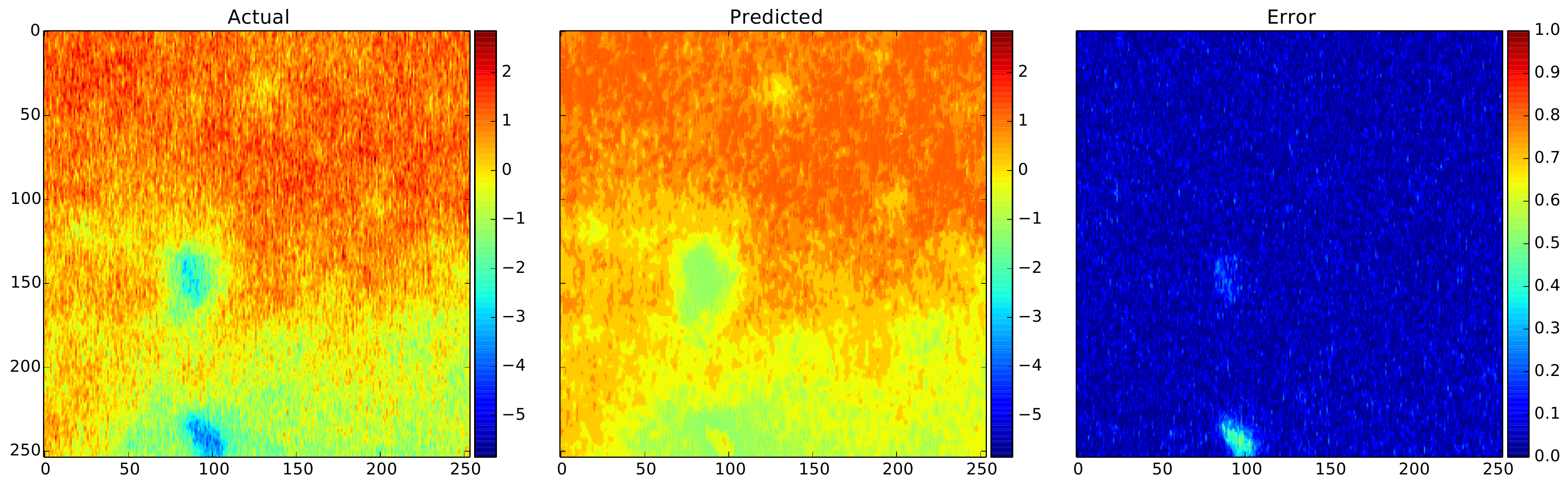}
\includegraphics[width=1.\linewidth]{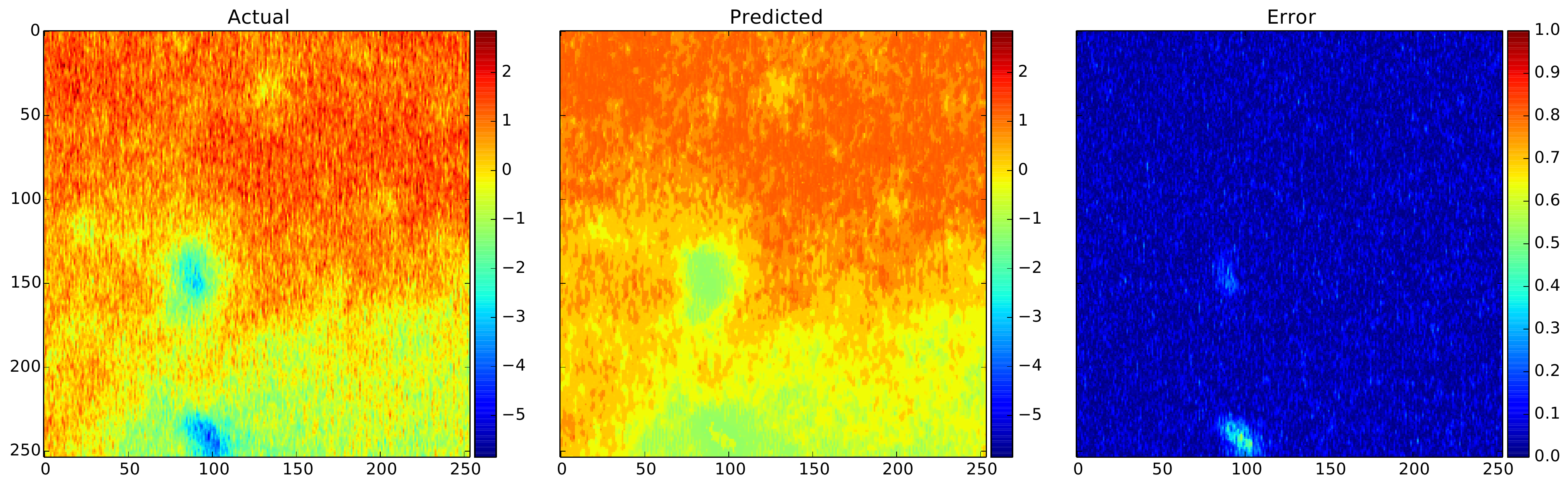}
\caption{Predicting electrostatic potentials with Moonshine.}
\label{fig:Moonshine-pred}
\end{figure}

\begin{figure}[htbp]
\centering
\includegraphics[width=1.\linewidth]{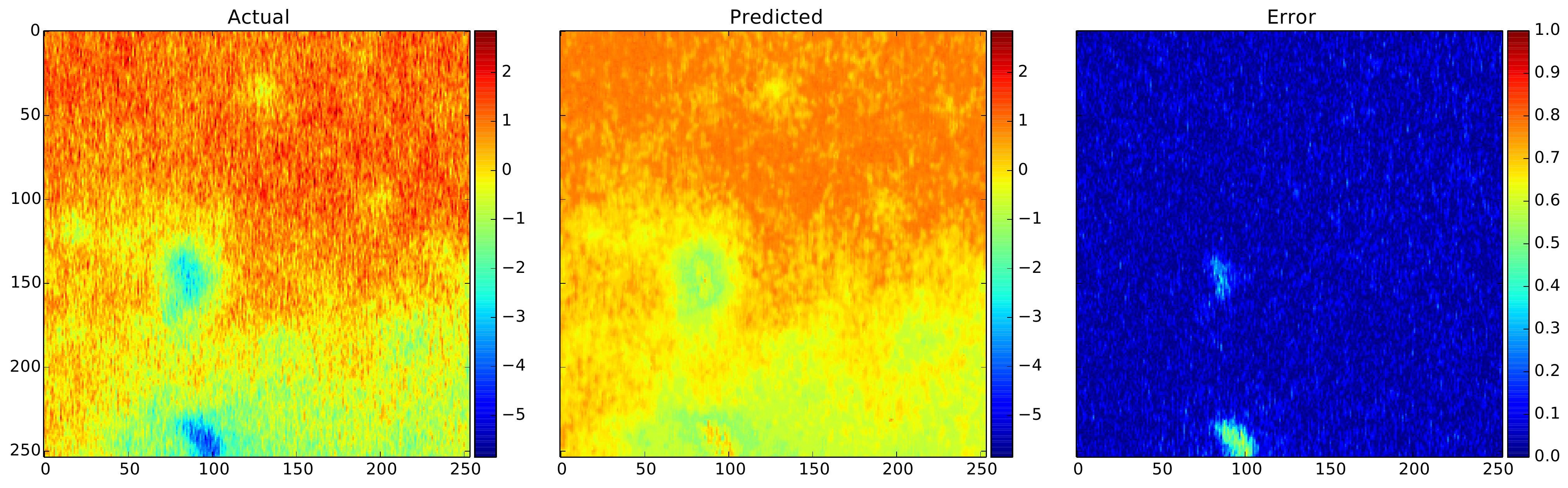}
\includegraphics[width=1.\linewidth]{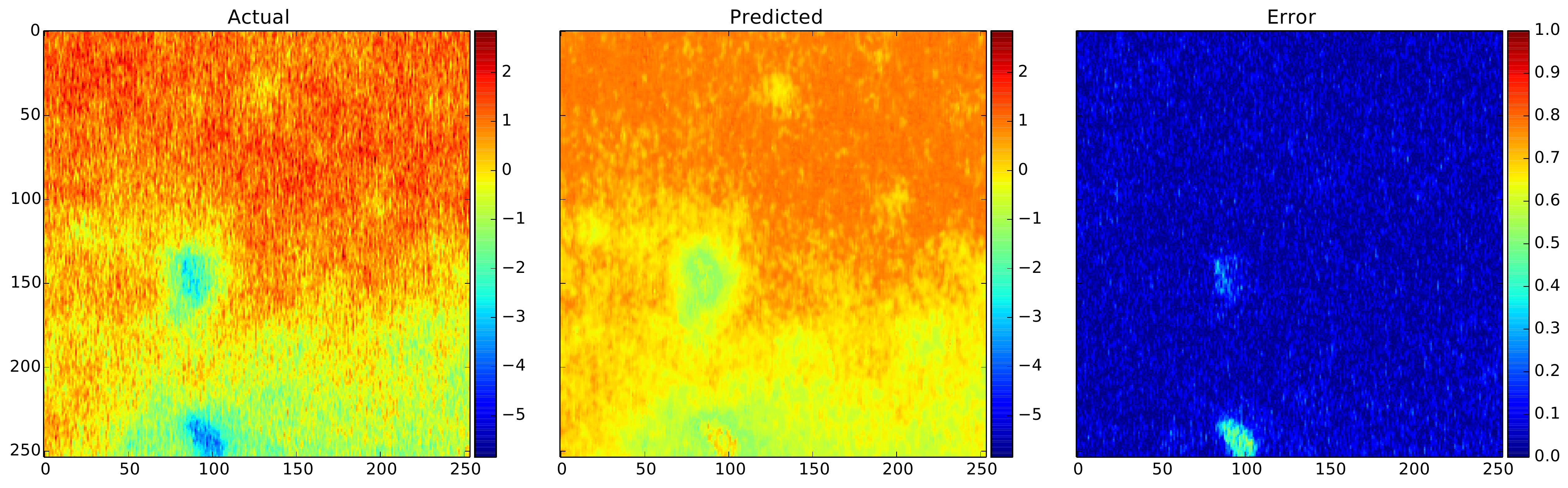}
\includegraphics[width=1.\linewidth]{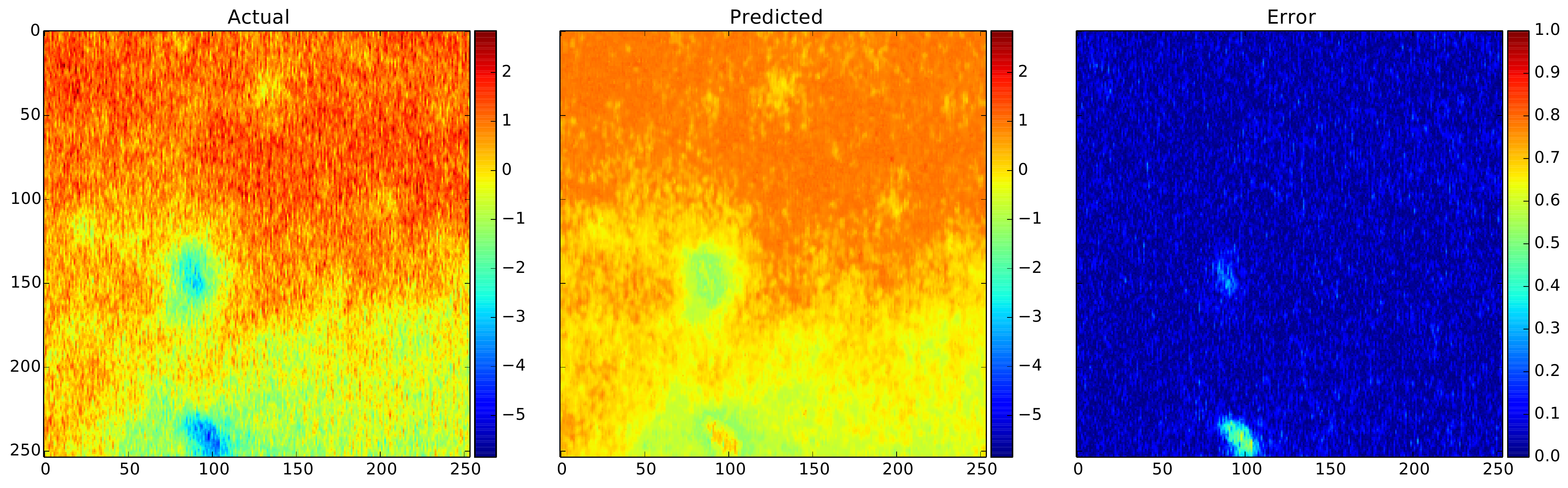}
\caption{Predicting electrostatic potentials with OHP.}
\label{fig:OHP-pred}
\end{figure}

Light cone systems compare favorably to state-of-the-art deep learning methods, such as
\citet{mathieu2015deep} (seen in Figure~\ref{fig:comparison}), which improves
on earlier work by \citet{ranzato2014video}. The amount of blurring and structural
 aberration becomes noticeable in their prediction examples, reproduced here. Compare with Figure~\ref{fig:abby-pred},
where a light cone system (mixed LICORS) is used to predict the next frame of human video. The
light cone predictions maintain strong structural consistency and minimal blurring, at the cost of
some quantization effects (due to predictive state clustering).

For the electrostatic potentials prediction task, Fig.\ \ref{fig:Moonshine-pred} and \ref{fig:OHP-pred} show
three frames of predictions each for Moonshine and OHP, respectively.  The next
frame (top to bottom) is predicted using models trained on the remaining six
experiments, given PLCs from the previous frame. Error percentage was
calculated as a proportion of the maximum dynamic range of the actual values or
predictions, namely,
\[
    err_{\text{pct}} = \frac{|t - p|}{|\max\{v : v \in T \cup P\} - \min\{v : v \in T \cup P\}|}
\]
where $T$ is the set of true testing frames, $P$ is the set of predicted
frames, $t$ is the true value at a pixel, $p$ is the predicted value of a pixel
and $|\cdot|$ is the $L_1$ norm.  Qualitatively, both methods do well,
capturing much of the changing dynamics in each frame.  The methods have
trouble representing the extreme values at the two ``hotspots'' (visible in the
error plots in the third columns), giving instead over-smoothed
predictions. Other than those extreme regions, the error residuals lack obvious
structure and are relatively small.


\subsection{Quantitative Results}

\begin{table*}[htbp]
  \centering
  \footnotesize
  \caption{Results for predicting electrostatic potentials.}
  \resizebox{1.0\linewidth}{!}{
    \begin{tabular}{lrrrrrrrr}
      \toprule
      \textbf{Method}  & $K_{\text{max}}$ & \textbf{MSE} &  \textbf{95\% CI} &  \textbf{Pearson $\rho$} &  \textbf{95\% CI} &  \textbf{Avg.\ LL} & \textbf{95\% CI} & \textbf{Perplexity} \\
      \midrule
      Future-like-the-Past  	            & $\cdot$ & 0.778 & [0.777, 0.780] & 0.615 & [0.614, 0.616]  & $\cdot$ & $\cdot$ & $\cdot$ \\
      KNN Regression              	    & $\cdot$ & 0.852 & [0.851, 0.853] & 0.506 & [0.505, 0.506]  & $\cdot$ & $\cdot$ & $\cdot$ \\
      \midrule
      Light Cone Linear Regression        & $\cdot$ & 0.607 & [0.606, 0.608] & 0.628 & [0.627, 0.628]  & $\cdot$ & $\cdot$ & $\cdot$ \\
      Mixed LICORS                        & 100 & \textbf{0.569} & \textbf{[0.567, 0.571]} & \textbf{0.663} & \textbf{[0.661, 0.665]} & -1.034 & [-1.110, -0.964] & 2.052 \\
      Moonshine           	            & 100 & \textbf{0.570} & \textbf{[0.569, 0.572]} & 0.656 & [0.655, 0.657] & \textbf{-0.672} & \textbf{[-0.727, -0.617]} & \textbf{1.593} \\
      One Hundred Proof           	    & 100 & 0.592 & [0.591, 0.593] & 0.641 & [0.640, 0.642] & -1.724 & [-2.127, -1.321] & 3.303 \\
      \midrule
      Mixed LICORS                        & 10  & \textbf{0.566} & \textbf{[0.565, 0.567]} & \textbf{0.668} &  \textbf{[0.667, 0.669]} & -1.022 & [-1.096, -0.947] & 2.030 \\
      Moonshine           	            & 10  & 0.609 & [0.605, 0.613] & 0.625 &  [0.622, 0.628] & -0.722 & \textbf{[-0.767, -0.678]} & 1.650 \\
      One Hundred Proof           	    & 10  & 0.597 & [0.595, 0.598] & 0.648 &  [0.646, 0.649] & \textbf{-0.682} & \textbf{[-0.757, -0.608]} & \textbf{1.605} \\
      \bottomrule
    \end{tabular}
  }
  \label{tab:real-world-regression}
\end{table*}

Table~\ref{tab:real-world-regression} shows how well each method did at
predicting electrostatic potentials (Task 1).  Mixed LICORS and Moonshine have
the lowest MSE, with 95\% confidence intervals disjoint from the intervals of
other methods. Mixed LICORS also has the highest (Pearson) correlation with the
true values. Lastly, of the generative methods (i.e., mixed LICORS, Moonshine
and One Hundred Proof), Moonshine and OHP have the highest average
log-likelihood and lowest perplexity. Thus, mixed LICORS and Moonshine provide
the best overall performance on the dataset.

Restricting ourselves to the generative methods for a compact number of states
($K_{\text{max}} = 10$), mixed LICORS has the lowest average MSE, while
Moonshine and One Hundred Proof have the best probabilistic performance, giving
the highest likelihoods and lowest perplexities for the data.

    \begin{table*}[htbp]
    \centering
    \footnotesize
    \caption{Results for predicting video of human speakers.}
    \resizebox{1.0\linewidth}{!}{
    \begin{tabular}{lrrrrrrrr}
    \toprule
    \textbf{Method}  & $K_{\text{max}}$ & \textbf{MSE} &  \textbf{95\% CI} &  \textbf{Pearson $\rho$} &  \textbf{95\% CI} &  \textbf{Avg.\ LL} & \textbf{95\% CI} & \textbf{Perplexity} \\
    \midrule
    Future-like-the-Past  	            & $\cdot$ & 0.031 & [0.031, 0.031] & 0.984 & [0.984, 0.984]  & $\cdot$ & $\cdot$ & $\cdot$ \\
    KNN Regression              	    & $\cdot$ & 0.033 & [0.033, 0.033] & 0.984 & [0.984, 0.984]  & $\cdot$ & $\cdot$ & $\cdot$ \\
    \midrule
    Light Cone Linear Regression        & $\cdot$ & \textbf{0.028} & \textbf{[0.028, 0.028]} & \textbf{0.986} & \textbf{[0.986, 0.0986]} & $\cdot$ & $\cdot$ & $\cdot$ \\
    Mixed LICORS                        & 100 & 0.038 & [0.038, 0.038] & 0.981 & [0.981, 0.981] & 0.102 & [0.099, 0.105] & 0.932 \\
    Moonshine           	            & 100 & 0.039 & [0.039, 0.039] & 0.981 & [0.981, 0.981] & \textbf{0.925} & \textbf{[0.874, 0.976]} & \textbf{0.527} \\
    One Hundred Proof           	    & 100 & 1.060 & [0.460, 1.659] & 0.911 & [0.871, 0.952] & -6.48 & [-8.025, -4.948] & 89.641 \\
    \bottomrule
    \end{tabular}
    }
    \label{tab:abby}
\end{table*}

Table~\ref{tab:abby} gives the results from video prediction (Task 2). Light
cone linear regression has the strongest overall performance, with low error
and high correlation to the ground truth.  However, the strong temporal
consistency of this dataset allows even the FLTP method to perform remarkably
well, outperforming the predictive state light cone methods. While forecasting
is relatively easy for this task, being able to estimate a likelihood model for
such data gives the predictive state methods an edge over pure regression methods.

\section{Discussion}
\label{sec:discussion}

In this manuscript, we have tested an existing light cone method (mixed LICORS), qualitatively 
comparing it to deep learning methods, and introduced three new light cone methods (light cone linear regression, Moonshine, OHP). The two 
latter predictive state methods are successive approximations of the approach used by mixed LICORS, with OHP pushing the 
limit of how simplified we could make the approximation. OHP is demonstrated to be one approximation too far, since the 
increased simplification comes at the cost of degraded performance.

On the first real-world spatio-temporal regression task, we find that the three
LICORS-inspired methods (mixed LICORS, Moonshine and One Hundred Proof) are
able to accurately forecast the changing dynamics of the underlying
spatio-temporal system. Furthermore, being generative methods, they can be used
to compute the likelihood of spatio-temporal data. Moonshine and One Hundred
Proof (OHP) are conceptually simple, easy to implement alternatives to the full
mixed LICORS system, which give comparable performance for likelihood
estimation and forecasting on this task. Although OHP is the simplest method,
it fails to perform well in some contexts, such as the second video prediction
task, showing a trade-off between method simplicity and forecasting
performance.

Light cone linear regression is a fast and simple method, and is able to
perform well on both prediction tasks. It does not estimate likelihoods over
data as do the other predictive-state methods, but moving to generalized linear
models would allow this. It shows the effectiveness of light cone decompositions
and remains a useful approach.

Overall, the best performance on all tasks was achieved or shared by the three new methods, with Moonshine 
having the best probabilistic modeling performance on both tasks, light cone linear regression having the best 
forecasting performance on the second task, and OHP having good modeling performance under the constrained setting 
of limited number of states. Moonshine has better probabilistic modeling performance than mixed LICORS on these tasks, and has statistically indistinguishable forecasting capability (see Tables~\ref{tab:real-world-regression} (100 state case) and \ref{tab:abby}). While it might be argued that the improved performance was not improved \emph{enough}, 
we have to remind ourselves that these are approximations -- that they improve performance at all is surprising. 

Although OHP does have limited forecasting ability, it does manage to model at least one of the datasets well, showing that its simplified form is not entirely without merit. This, at very least, shows when approximations become too simplified to accomplish complex tasks. Negative results are important, especially when detecting boundaries.

\section{Theoretical Results}
\label{sec:theory}

We state a result for light cone predictive state systems,
with proof given in Appendix I.

We wish to bound the error of our estimated distribution over futures given
pasts, namely, the error of $\widehat{P}(X|\ell^-)$. For a fixed random sample
of data, let $P^*(X|\ell^-)$ denote the optimal estimate for $P(X|\ell^-)$
constructable from the sample. We begin by noting 
\begin{align*}
    |\widehat{P}(X|\ell^-) - P(X|\ell^-)| &\leq |\widehat{P}(X|\ell^-) - P^*(X|\ell^-)| +\\
                                          &\phantom{+}\;\; |P^*(X|\ell^-) - P(X|\ell^-)|.
\end{align*}

The second summand on the right-hand side is the gap between the
optimal estimate and truth, which we assume to shrink in probability with the
sample size (as in \citealt{goerg2012licors}).  We focus on first term, which
is the gap between our light-cone based nonparametric estimator and the optimal
estimate.  For this quantity, we state our main result:
\begin{thm}
  For a fixed data sample of size $N$, let $P^*(X|\ell^-)$ denote the optimal
  estimator based on that sample and $\widehat{P}(X|\ell^-)$ be the light cone
  estimator based on the same sample. Let $\widehat{P}(X|S_j)$ be bounded by a
  constant $M$ for all $X, j$. If 
  \[
    |\widehat{P}(S_j|\ell^-) - P^*(S_j|\ell^-)|\rightarrow 0
  \]
  for all $j$, then for any $X$, $\epsilon > 0$, $\delta > 0$, and sufficiently large $N$,
   {\footnotesize
\[
    \PP\left(|\widehat{P}(X|\ell^-) - P^*(X|\ell^-)| > \epsilon\right) \leq 2\exp\left\{-\frac{2(1+N^{*}_{s})^2(\epsilon - \delta)^2}{N K_h(0)^2}\right\},
\]
   }
where $N^*_{s}$ is the (smallest) sum of weights for the predictive states and $K_h(\cdot)$ is a bandwidth $h$ kernel.
\end{thm}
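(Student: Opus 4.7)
The plan is to decompose the error $|\widehat{P}(X|\ell^-) - P^*(X|\ell^-)|$ into a ``weight error'' piece and a ``density error'' piece, control the first via the hypothesis $|\widehat{P}(S_j|\ell^-) - P^*(S_j|\ell^-)| \to 0$, and apply Hoeffding's inequality to the second. First I would expand both estimators using the mixture representation developed in Section~\ref{sec:methods}, $P(X|\ell^-) = \sum_j P(S_j|\ell^-)P(X|S_j)$, add and subtract the cross term $\sum_j \widehat{P}(S_j|\ell^-) P^*(X|S_j)$, and invoke the triangle inequality to get
\begin{align*}
|\widehat{P}(X|\ell^-) - P^*(X|\ell^-)| &\leq \sum_{j=1}^{K}\widehat{P}(S_j|\ell^-)\,|\widehat{P}(X|S_j) - P^*(X|S_j)| \\
 &\phantom{\leq}\; + \sum_{j=1}^{K}|\widehat{P}(S_j|\ell^-) - P^*(S_j|\ell^-)|\,P^*(X|S_j).
\end{align*}
Because the $\widehat{P}(S_j|\ell^-)$ form a probability vector and $P^*(X|S_j) \leq M$, this reduces to $\max_j|\widehat{P}(X|S_j) - P^*(X|S_j)| + M \sum_j |\widehat{P}(S_j|\ell^-) - P^*(S_j|\ell^-)|$.

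Next I would use the convergence hypothesis on the state-weight estimates to deterministically bound the second sum by the slack $\delta$ once $N$ is large enough. This reduces the original tail event $\{|\widehat{P}(X|\ell^-) - P^*(X|\ell^-)| > \epsilon\}$ to the event $\{\max_j |\widehat{P}(X|S_j) - P^*(X|S_j)| > \epsilon - \delta\}$, which is why $(\epsilon - \delta)^2$ appears in the exponent of the stated bound.

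Finally, $\widehat{P}(X|S_j)$ is a (soft-membership) kernel density estimate $\widehat{P}(X|S_j) \propto \sum_{i=1}^{N} w_{ij}\,K_h(X - X_i)$ with nonnegative summands bounded by $w_{ij} K_h(0)$ and total mass $N_j^{\mathrm{eff}} = \sum_i w_{ij}$. Treating $P^*(X|S_j)$ as the sample-conditional mean of $\widehat{P}(X|S_j)$ and applying Hoeffding's inequality for weighted sums of bounded variables gives a bound of the form $2\exp\{-2(N_j^{\mathrm{eff}})^2 t^2 / (\sum_i w_{ij}^2 K_h(0)^2)\}$; using $\sum_i w_{ij}^2 \leq \sum_j \sum_i w_{ij}^2 \leq N$ in the denominator and $N_j^{\mathrm{eff}} \geq N^*_s$ in the numerator (for the worst-case state) yields exactly the claimed exponent, with the ``$1+N^*_s$'' arising from a small smoothing/boundary adjustment that keeps the state-conditioned denominator bounded away from zero.

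The main obstacle I expect is the bookkeeping around the soft cluster memberships: concretely, justifying the passage from per-state Hoeffding bounds to a single bound over the worst-case state without paying a union-bound factor of $K$, and tracking the exact role of $N^*_s$ versus $N_j^{\mathrm{eff}}$ so the constants land in the stated $(1+N^*_s)^2/(N K_h(0)^2)$ form. The reduction from ``convergence in probability'' of the weight estimates to a deterministic $\delta$-bound for sufficiently large $N$ is a softer step but still needs a clean statement of the sample regime in which the bound holds.
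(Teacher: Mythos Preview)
Your decomposition and the handling of the weight-error term match the paper's argument almost exactly; the paper adds and subtracts the mirror-image cross term $\widehat{P}(X|S_j)P^*(S_j|\ell^-)$ instead of your $\widehat{P}(S_j|\ell^-)P^*(X|S_j)$, which is cosmetically different but has one practical consequence: in the paper's version the ``small'' sum carries the factor $\widehat{P}(X|S_j)$, which is exactly what the hypothesis bounds by $M$. Your version needs $P^*(X|S_j)\leq M$, which is not literally assumed, so either swap the cross term or add that as an extra (harmless) hypothesis.

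The real divergence is the Hoeffding step. The paper does \emph{not} view $P^*(X|S_j)$ as the sample-conditional mean of $\widehat{P}(X|S_j)$; rather, $P^*$ is the weighted KDE computed under the \emph{optimal} assignment matrix $W^*$, and the randomness is in the estimated weights $\widehat{w}_{ij}$. The route to the bound is a two-lemma chain: first a deterministic sensitivity lemma showing that perturbing a single weight $w_{ij}$ by $\epsilon\in[0,1]$ changes the state-$j$ KDE by at most $K_h(0)/(N_j^*+\epsilon)\leq K_h(0)/(1+N_j^*)$; second, a Hoeffding-type inequality for \emph{dependent} increments (the paper cites van de Geer) applied with $U_i-L_i = K_h(0)/(1+N_j^*)$, which produces the $(1+N_j^*)^2/(N K_h(0)^2)$ exponent directly. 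So the $1+N_s^*$ is not a smoothing artifact; it is the per-coordinate Lipschitz constant of the weighted KDE in the weight matrix, and the dependent-data Hoeffding is what lets you avoid an independence assumption on the $w_{ij}$. Your direct ``weighted-sum Hoeffding'' sketch would need both the right notion of what is random and a justification for dependent summands to land on the stated constants; as written it does not quite get there. Your worry about a missing factor of $K$ from a union bound is legitimate: the paper simply replaces $N_j^*$ by the worst-case $N_s^*$ without an explicit union step.
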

{\em Proof sketch} (see appendix for details): For the quantity
$|\widehat{P}(X|\ell^-) - P^*(X|\ell^-)|$, we first mix over states, and use
the chain rule to condition.  Then we add and subtract
$\widehat{P}(X|S_j)P^*(S_j|\ell^-)$, and split the sum into two parts, one
multiplied by $P^*(S_j|\ell^-)$ and the other multiplied by
$\widehat{P}(X|S_j)$. By the assumptions stated, the second sum is bounded and
decreasing to zero, so that for sufficiently large $N$ it is smaller than any
$\delta >0$. The first sum is less than $\max_{j}|\widehat{P}(X|S_j) -
P^*(X|S_j)|$, which we bound with high probability, using a Hoeffding bound for
dependant data \citep{van-de-Geer-on-Hoeffding-for-dependent}. The result
follows directly from application of the Hoeffding bound.

\section{Related and Future Work}\label{sec:RELATED-FUTURE}

\subsection{Related Work}

Our debt to \citet{goerg2012licors,goerg2013mixed} needs no elaboration.  We
share the same general framework, but aim at simpler algorithms, even if it
costs some predictive power. The work on LICORS grows out of earlier work on
predictive Markovian representations of non-Markovian time series
\citep{Knight-predictive-view,Inferring-stat-compl,CMPPSS,CSSR-for-UAI}, whose
transfer to spatio-temporal data was originally aimed at unsupervised pattern
analysis in natural systems \citep{QSO-in-PRL,Automatic-Filters}; our
qualitative results suggest Moonshine and OHP remain suitable for this, as well
as for prediction.  The formalism used in this line of work is mathematically
equivalent to the ``predictive representations of state'' introduced by
\citet{predictive-representations-of-state}, and lately the focus of much
interest in conjunction with spectral estimation methods
\citep{Boots-et-al-online-spectral-learning}.  Both formalisms are also
equivalent to observable operator models \citep{Jaeger-operator-models} and to
``sufficient posterior'' representations \citep{Langford-Salakhutdinov-Zhang};
our approach may suggest new estimation algorithms within those formalisms.

\subsection{Future Work}

\begin{figure}[htbp]
\centering
\includegraphics[width=1.\linewidth]{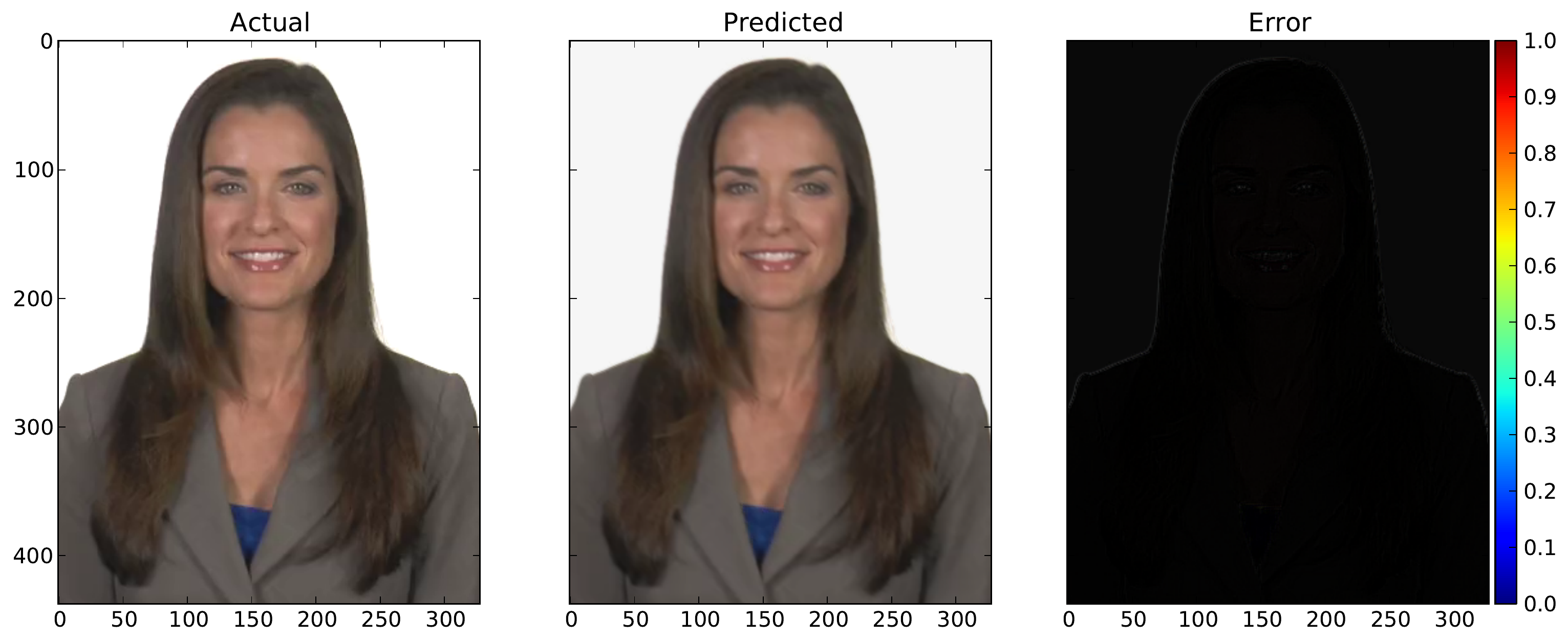}
\caption{Color prediction of human film data, using mixed LICORS light cone forecasting.}
\label{fig:iva-pred}
\end{figure}

Light cone methods, such as the three described here, hold promise for the prediction of dynamical systems. Given the
flexibility and generality of light cone decompositions, one can easily extend such methods to handle full-color video (e.g., Figure~\ref{fig:iva-pred}),
and Kinect\texttrademark-sensor depth video. These applications are the focus of current and future research.

The ``rate limiting step'' for approximate light cone methods like Moonshine
and OHP is the speed of nonparametric density estimation. Methods that scale
poorly in the number of observations are of limited use. Towards that end,
future research into fast approximate nonparametric density estimation will
improve the computational efficiency of the methods presented.

The theoretical properties of the two predictive state methods will be further
explored in a future paper, especially with regard to the trade-offs in their
approximation to what LICORS or mixed LICORS would do, and the influence of the
new algorithms' internal randomness.

\section{Conclusion}\label{sec:CONCLUSION}

Faced with the task of learning to accurately model video-like data, we
explore the strengths and drawbacks of light cone decomposition methods and
propose new simplified nonparametric predictive state methods inspired by the mixed
LICORS \citep{goerg2013mixed} algorithm. The methods, Moonshine and One Hundred
Proof, do not require costly iterative EM training or the memory intensive
formation of an explicit $N \times K$ matrix, yet retain the generative
modeling capabilities and are competitive in predictive performance to the original mixed
LICORS method. The methods are shown to perform well on one real-world data task,
effectively capturing spatio-temporal structure and outperforming baseline methods,
while a light cone version of linear regression performs well on the remaining task.
Overall, we see that light cone decompositions of complex spatio-temporal data can open
opportunities to tractably estimate probability densities and accurately forecast the
changing systems. By introducing simplified versions of light cone algorithms, we hope to
encourage further exploration and application of this general technique.

\bibliographystyle{abbrvnat}
\IEEEtriggeratref{13}
{\small
\bibliography{references,locusts}}

\newpage
\onecolumn
\section*{Appendix I: Proofs}

\setcounter{thm}{0}
\setcounter{lemma}{0}

\begin{lemma}\label{lem:1}
Let $f_j^*(\cdot)$ denote the density for state $j$ under the true assignment matrix $W^*$ and let ${N}^*_j = \sum_{l=1}^{N}w_{lj}$. Given an isolated change in $\epsilon$ in the weight $w_{ij}$, the difference between density estimate $\hat{f}_j(\cdot)$ and $f_j^*(\cdot)$ is bound by
\[
    |\hat{f}_j(x) - f_j^*(x)| \leq \left|\frac{\epsilon}{{N}^*_j + \epsilon}K_h(0)\right|.
\]
\end{lemma}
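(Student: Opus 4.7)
The plan is to work directly from the definition of the weighted kernel density estimator. Writing $f_j^*(x) = \frac{1}{N_j^*}\sum_{l=1}^{N} w_{lj}^* K_h(x - x_l)$, an isolated perturbation $\epsilon$ added to $w_{ij}^*$ changes both the numerator (by $\epsilon K_h(x - x_i)$) and the normalizing denominator (from $N_j^*$ to $N_j^* + \epsilon$), so
\[
\hat{f}_j(x) \;=\; \frac{1}{N_j^* + \epsilon}\left[\sum_{l=1}^{N} w_{lj}^* K_h(x - x_l) + \epsilon K_h(x - x_i)\right].
\]
Subtracting $f_j^*(x)$ and combining over a common denominator, the cross-terms telescope so that both remaining contributions carry a factor of $\epsilon/(N_j^* + \epsilon)$. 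The expected clean form is
\[
\hat{f}_j(x) - f_j^*(x) \;=\; \frac{\epsilon}{N_j^* + \epsilon}\bigl[K_h(x - x_i) - f_j^*(x)\bigr],
\]
which makes the factor in the lemma emerge naturally from the reweighting.

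Next I would bound the bracketed quantity by $K_h(0)$. For the standard nonparametric kernels used in the paper (symmetric, nonnegative, maximized at the origin), $K_h(u) \in [0, K_h(0)]$ for every $u$, and $f_j^*(x)$, being a convex combination of such kernel values, also lies in $[0, K_h(0)]$. Therefore the difference of two numbers in $[0, K_h(0)]$ is at most $K_h(0)$ in absolute value, and the triangle inequality delivers
\[
|\hat{f}_j(x) - f_j^*(x)| \;\leq\; \left|\frac{\epsilon}{N_j^* + \epsilon}\right| K_h(0),
\]
as claimed. The sign of $\epsilon$ does not matter: if $\epsilon < 0$ the same algebra goes through, provided $N_j^* + \epsilon > 0$ (which is implicit in treating it as a perturbation of a valid weight configuration), and the outer absolute value absorbs the sign.

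There is no real obstacle here; the only step that requires care is the algebraic rearrangement that factors $\epsilon/(N_j^* + \epsilon)$ out of the difference, since the two terms at first appear to have different denominators ($N_j^*$ versus $N_j^* + \epsilon$). Once that combination is carried out cleanly, the kernel-shape bound is immediate, and the lemma follows without needing any probabilistic argument. This local-sensitivity estimate is exactly the ingredient needed downstream for the Hoeffding-type bound in the theorem, where bounded per-coordinate differences of the estimator are required.
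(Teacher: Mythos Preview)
Your proposal is correct and follows essentially the same approach as the paper: both expand the weighted KDE, combine over the common denominator $N_j^*(N_j^*+\epsilon)$, and factor out $\epsilon/(N_j^*+\epsilon)$. Your compact identity $\hat f_j(x)-f_j^*(x)=\tfrac{\epsilon}{N_j^*+\epsilon}\bigl[K_h(x-x_i)-f_j^*(x)\bigr]$ is exactly what the paper obtains after rewriting its expression $\tfrac{\epsilon}{N_j^*+\epsilon}\sum_k \tfrac{w_{kj}}{N_j^*}\bigl[K_h(\|x_i-x\|)-K_h(\|x_k-x\|)\bigr]$; the paper then bounds the convex combination by the max term and by $K_h(0)$, while you bound the bracket directly using $K_h(x-x_i),\,f_j^*(x)\in[0,K_h(0)]$ --- the same kernel-shape argument in a slightly tidier form.
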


\begin{proof}
\begin{align}
    |\hat{f}_j(x) - f_j^*(x)| &= \left|\left[\sum_{k\not=i}^{N} \frac{w_{kj}}{\epsilon + \sum_{l=1}^{N}w_{lj}}K_h(\|x_k - x\|) + \frac{\epsilon + w_{ij}}{\epsilon + \sum_{l=1}^{N}w_{lj}}K_h(\|x_i - x\|)\right] - \sum_{k=1}^{N} \frac{w_{kj}}{\sum_{l=1}^{N}w_{lj}}K_h(\|x_k - x\|)\right|\\
        &= \left|\left[\sum_{k\not=i}^{N} \frac{w_{kj}}{\epsilon + {N}^*_j}K_h(\|x_k - x\|) + \frac{\epsilon + w_{ij}}{\epsilon + {N}^*_j}K_h(\|x_i - x\|)\right] - \sum_{k=1}^{N} \frac{w_{kj}}{{N}^*_j}K_h(\|x_k - x\|)\right|\\
        &= \left|\left[\sum_{k\not=i}^{N} \frac{{N}^*_jw_{kj}}{{N}^*_j(\epsilon + {N}^*_j)}K_h(\|x_k - x\|) + \frac{{N}^*_j(\epsilon + w_{ij})}{{N}^*_j(\epsilon + {N}^*_j)}K_h(\|x_i - x\|)\right] - \sum_{k=1}^{N} \frac{(\epsilon + {N}^*_j)w_{kj}}{{N}^*_j(\epsilon + {N}^*_j)}K_h(\|x_k - x\|) \right|\\
        &= \left|\frac{\epsilon}{\left({N}_j^{*}\right)^2 + \epsilon{N}^*_j}\left[{N}^*_jK_h(\|x_i - x\|) - \sum_{k=1}^{N}w_{kj}K_h(\|x_k - x\|)\right]\right|\\
        &= \left|\frac{\epsilon}{\left({N}_j^{*}\right)^2 + \epsilon{N}^*_j}\left[\sum_{k=1}^{N}w_{kj}K_h(\|x_i - x\|) - \sum_{k=1}^{N}w_{kj}K_h(\|x_k - x\|)\right]\right| \\
        &= \left|\frac{\epsilon}{\left({N}_j^{*}\right)^2 + \epsilon{N}^*_j}\sum_{k=1}^{N}w_{kj}\left[K_h(\|x_i - x\|) - K_h(\|x_k - x\|)\right]\right|.
\end{align}
Furthermore, we can bound this quantity by
\begin{align}
      \left|\frac{\epsilon}{\left({N}_j^{*}\right)^2 + \epsilon{N}^*_j}\sum_{k=1}^{N}w_{kj}\left[K_h(\|x_i - x\|) - K_h(\|x_k - x\|)\right]\right| &= \left|\frac{\epsilon}{{N}^*_j + \epsilon}\sum_{k=1}^{N}\frac{w_{kj}}{{N}^*_j}\left[K_h(\|x_i - x\|) - K_h(\|x_k - x\|)\right]\right| \\
        &\leq \left|\frac{\epsilon}{{N}^*_j + \epsilon}\max_{k}\left[K_h(\|x_i - x\|) - K_h(\|x_k - x\|)\right]\right|\\
        &\leq \left|\frac{\epsilon}{{N}^*_j + \epsilon}K_h(0)\right|.
\end{align}
\end{proof}

\begin{lemma}\label{lem:2}
Let $\hat{f}_j, f_j^*, N^{*}_{j}$ be defined as in Lemma~\ref{lem:1}. Given a fixed data sample of size $N$, for all $x$, $a > 0$ and $c > 0$ we have
\[
 \PP(|\hat{f}_j(x) - f_j^*(x)| \geq a) \leq 2 \exp\left\{-\frac{2(1+N^{*}_{j})^2a^2}{N K_h(0)^2}\right\}.
\]
\end{lemma}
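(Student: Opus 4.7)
The bound has the characteristic form of a bounded-differences (McDiarmid / Hoeffding) inequality applied to a function of $N$ weights, so the plan is to combine Lemma~\ref{lem:1} with such an inequality. View $\hat f_j(x)$ as a deterministic function of the random weight vector $(w_{1j},\ldots,w_{Nj})$, with $f_j^*(x)$ playing the role of the centered (``true-assignment'') value. Because each weight lies in $[0,1]$, the largest isolated perturbation to any coordinate is $|\epsilon|\le 1$. Plugging $|\epsilon|=1$ into the bound of Lemma~\ref{lem:1} gives the per-coordinate stability constant
\[
    c_i \;\le\; \left|\frac{1}{N^{*}_{j}+1}K_h(0)\right|\;=\;\frac{K_h(0)}{1+N^{*}_{j}},
\]
which is the key ingredient: changing a single $w_{ij}$ moves $\hat f_j(x)$ by at most $c_i$.

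With this bounded-differences property in hand, I would apply a Hoeffding/McDiarmid-type concentration inequality to $\hat f_j(x)-f_j^*(x)$. Summing the squared differences over the $N$ coordinates gives
\[
    \sum_{i=1}^{N} c_i^{2} \;\le\; \frac{N\,K_h(0)^{2}}{(1+N^{*}_{j})^{2}},
\]
and substituting this denominator into the standard Hoeffding/McDiarmid exponent $\exp\{-2a^{2}/\sum_i c_i^{2}\}$ yields exactly
\[
    \PP\bigl(|\hat f_j(x)-f_j^*(x)|\ge a\bigr)\;\le\;2\exp\left\{-\frac{2(1+N^{*}_{j})^{2}a^{2}}{N\,K_h(0)^{2}}\right\},
\]
which is the claimed bound. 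The factor of $2$ in front comes from the two-sided version of the inequality.

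The main obstacle is justifying the use of a concentration inequality here, because the $w_{ij}$ are unlikely to be independent: they are produced by a clustering/assignment procedure run on overlapping light-cone data. This is precisely why the theorem statement invokes the van de Geer Hoeffding bound for dependent data rather than the classical McDiarmid inequality. My plan is therefore to appeal to that dependent-data version: verify that the dependence structure among the $(w_{ij})_{i=1}^{N}$ satisfies the mixing/martingale hypothesis required by van de Geer's theorem, treat Lemma~\ref{lem:1} as supplying the bounded-differences constants, and conclude with the same exponent. A secondary technical point is that $f_j^{*}$ must play the role of the centering (expectation) in the concentration bound; this should follow from the definition of $W^{*}$ as the ``true'' assignment, but it is the place where care is needed if $\E[\hat f_j(x)]\ne f_j^*(x)$, in which case a small additional bias term would need to be absorbed into the slack parameter $\delta$ that already appears in Theorem~1.
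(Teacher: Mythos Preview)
Your proposal is correct and follows essentially the same route as the paper: both use Lemma~\ref{lem:1} with $|\epsilon|\le 1$ to obtain the per-coordinate bounded-difference constant $K_h(0)/(1+N^*_j)$, sum the $N$ squared constants, and plug the result into van~de~Geer's Hoeffding inequality for dependent data to get the two-sided exponential bound. The only cosmetic difference is that the paper writes the argument directly in van~de~Geer's $(S_n,L_i,U_i,C_n^2)$ notation rather than in the McDiarmid bounded-differences language you use; your caveats about verifying the dependence hypotheses and the centering $f_j^*=\E[\hat f_j]$ are well taken and are not addressed any more explicitly in the paper's own proof.
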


\begin{proof}
Once the sample is fixed, $f^*_j(\cdot)$ becomes a deterministic function of the sample, and $N^*_j$ becomes a deterministic constant. Following \citealt{van-de-Geer-on-Hoeffding-for-dependent}, we define 
\begin{align}
    S_n &= \sum_{i=1}^{N} [\hat{f}_{ij}(x) - f_{ij}^*(x)], \\
    L_i &= 0,\\
    U_i &= \frac{1}{1+N^{*}_{j}}K_h(0),
\end{align}
where $\hat{f}_{ij}(x) - f_{ij}^*(x)$ denotes that the two functions only differ at the $i$th matrix entry, $L_i$ and $U_i$ are constant (degenerate) random variables for a fixed sample and
\begin{align}
    C_N^2 &= \sum_{i=1}^{N} (U_i - L_i)^2 \\
          &= \sum_{i=1}^{N} \left(\frac{1}{1+N^{*}_{j}}K_h(0)\right)^2 \\
          &= N \left(\frac{1}{1+N^{*}_{j}}K_h(0)\right)^2 \\
\end{align}
Then, for all $x$, $a > 0$ and $c > 0$, we have
\begin{align}
 \PP(|\hat{f}_j(x) - f_j^*(x)| \geq a , C_n^2 \leq c^2 \text{ for some $n$}) &\leq \PP(\hat{f}_j(x) - f_j^*(x) \geq a , C_n^2 \leq c^2 \text{ for some $n$}) +\notag{}\\
       \phantom{{} } &\phantom{{} \leq {}} \PP(-\hat{f}_j(x) + f_j^*(x) \geq a , C_n^2 \leq c^2 \text{ for some $n$}) \\
        &\leq \PP\left(\sum_{i=1}^n [\hat{f}_{ij}(x) - f_{ij}^*(x)] \geq a , C_n^2 \leq c^2 \text{ for some $n$}\right) +\notag{}\\
        \phantom{{} } &\phantom{{} \leq {}} \PP\left(\sum_{i=1}^n [-\hat{f}_{ij}(x) + f_{ij}^*(x)] \geq a , C_n^2 \leq c^2 \text{ for some $n$}\right) \\
        &= \PP(S_n \geq a , C_n^2 \leq c^2 \text{ for some $n$}) +\notag{}\\
        \phantom{{} } &\phantom{{} \leq {}} \PP(-S_n \geq a , C_n^2 \leq c^2 \text{ for some $n$}) \\
        &\leq 2 \exp\left\{-\frac{2a^2}{c^2}\right\}.
\end{align}

Given a fixed sample of size $N$, choose $c_0$ such that $C_n^2 \leq c_0^2$ for all $1 \leq n \leq N$. Then
\begin{align}
 \PP(|\hat{f}_j(x) - f_j^*(x)| \geq a , C_n^2 \leq c_0^2 \text{ for some $ 1 \leq n \leq N$}) \\
        &\leq \PP(|\hat{f}_j(x) - f_j^*(x)| \geq a , C_n^2 \leq c_0^2 \text{ for some $n$}) \\
        &\leq 2 \exp\left\{-\frac{2a^2}{c_0^2}\right\}.
\end{align}
Because $C_n^2 \leq c_0^2$ for all $1 \leq n \leq N$, we have 
\begin{align}
\PP(|\hat{f}_j(x) - f_j^*(x)| \geq a) &= \PP(|\hat{f}_j(x) - f_j^*(x)| \geq a , C_n^2 \leq c_0^2 \text{ for some $ 1 \leq n \leq N$}) \\
        &\leq 2 \exp\left\{-\frac{2a^2}{c_0^2}\right\}.
\end{align}
Having already establish that $C_N^2 \leq N \left(\frac{1}{1+N^{*}_{j}}K_h(0)\right)^2$, we set $c_0 = \sqrt{N \left(\frac{1}{1+N^{*}_{j}}K_h(0)\right)^2}$ and obtain
\begin{align}\label{res:1}
 \PP(|\hat{f}_j(x) - f_j^*(x)| \geq a) &\leq 2 \exp\left\{-\frac{2(1+N^{*}_{j})^2a^2}{N K_h(0)^2}\right\}.
\end{align}
\end{proof}

\begin{thm}
For a fixed data sample of size $N$, let $P^*(X|\ell^-)$ denote the optimal estimator based on that sample and $\widehat{P}(X|\ell^-)$ be the light cone estimator based on the same sample. Let $\widehat{P}(X|S_j)$ be bounded by a constant $M$ for all $X, j$. If $|\widehat{P}(S_j|\ell^-) - P^*(S_j|\ell^-)| \rightarrow 0$ for all $j$, then for any $X$, $\epsilon > 0$, $\delta > 0$, and sufficiently large $N$,
\[
    \PP\left(|\widehat{P}(X|\ell^-) - P^*(X|\ell^-)| > \epsilon\right) \leq 2\exp\left\{-\frac{2(1+N^{*}_{s})^2(\epsilon - \delta)^2}{N K_h(0)^2}\right\},
\]
where $N^*_{s}$ is the (smallest) sum of weights for the predictive states, and $K_h(\cdot)$ is a kernel of bandwidth $h$. 
\end{thm}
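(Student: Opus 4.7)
The plan is to follow the decomposition suggested by the proof sketch and lean on Lemmas~\ref{lem:1} and \ref{lem:2} for the concentration step. First I would expand both estimators using the predictive-state mixture,
\[
\widehat{P}(X|\ell^-) = \sum_{j=1}^{K}\widehat{P}(X|S_j)\widehat{P}(S_j|\ell^-), \qquad P^*(X|\ell^-) = \sum_{j=1}^{K} P^*(X|S_j) P^*(S_j|\ell^-),
\]
subtract, and insert the pivot $\widehat{P}(X|S_j)P^*(S_j|\ell^-)$ inside each summand. Grouping yields
\[
\widehat{P}(X|\ell^-) - P^*(X|\ell^-) = \sum_{j=1}^{K} \widehat{P}(X|S_j)\bigl[\widehat{P}(S_j|\ell^-) - P^*(S_j|\ell^-)\bigr] + \sum_{j=1}^{K} P^*(S_j|\ell^-)\bigl[\widehat{P}(X|S_j) - P^*(X|S_j)\bigr].
\]

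Next I would control each of the two sums separately. For the first sum, the triangle inequality together with the uniform bound $\widehat{P}(X|S_j) \leq M$ gives an upper bound of $M \sum_{j} |\widehat{P}(S_j|\ell^-) - P^*(S_j|\ell^-)|$. By hypothesis every summand vanishes as $N \to \infty$, and since the sum is over the finite collection of $K$ states, for $N$ sufficiently large this quantity can be made strictly smaller than any prescribed $\delta > 0$. For the second sum, the weights $P^*(S_j|\ell^-)$ are nonnegative and sum to one, so it is at most $\max_j |\widehat{P}(X|S_j) - P^*(X|S_j)|$. Combining these, the event $\{|\widehat{P}(X|\ell^-) - P^*(X|\ell^-)| > \epsilon\}$ is contained, for $N$ large enough, in the event $\{\max_j |\widehat{P}(X|S_j) - P^*(X|S_j)| > \epsilon - \delta\}$.

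Finally, I would invoke Lemma~\ref{lem:2} to bound the probability of this residual event. Lemma~\ref{lem:2} gives a Hoeffding-type tail inequality for a single state $j$ whose decay rate depends monotonically on the state's effective sample size $N^*_j$. Taking $N^*_s = \min_j N^*_j$ produces the weakest (largest) such tail bound, which dominates the others uniformly; substituting $a = \epsilon - \delta$ into that weakest bound yields the expression in the theorem statement.

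The main obstacle is handling the \emph{maximum} over the $K$ predictive states at the last step: a naive union bound would introduce a multiplicative factor of $2K$ that does not appear in the stated result. The cleanest justification is to argue that the state achieving the maximum is, with probability approaching one as $N$ grows, precisely the state whose effective weight is smallest, so applying Lemma~\ref{lem:2} at that worst-case state with parameter $N^*_s$ controls the max directly; alternatively one absorbs the $\log(2K)$ slack into the "sufficiently large $N$" regime that the $\delta$ slack already affords. A secondary subtlety, also absorbed into the "sufficiently large $N$" qualifier, is ensuring that the convergence $|\widehat{P}(S_j|\ell^-) - P^*(S_j|\ell^-)| \to 0$ is fast enough that the first sum is uniformly below $\delta$ simultaneously across all $j$.
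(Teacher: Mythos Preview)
Your proposal follows the paper's proof essentially step for step: the same mixture expansion, the same add-and-subtract pivot $\widehat{P}(X|S_j)P^*(S_j|\ell^-)$, the same split into an $A$-term bounded via Lemma~\ref{lem:2} and a $B$-term driven to zero by the convergence hypothesis, and the same final replacement of $N^*_j$ by $N^*_s$. The union-bound issue you flag for the $\max_j$ step is real, but the paper simply passes from $\PP(\max_j|\widehat{P}(X|S_j)-P^*(X|S_j)|>\epsilon-\delta)$ to the Lemma~\ref{lem:2} bound with $N^*_s$ without invoking a factor of $K$, so your proposal is no less complete than the original on that point.
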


\begin{proof}
\begin{align}
    \left|\widehat{P}(X|\ell^-) - P^*(X|\ell^-)\right| &= \left|\sum_{j=1}^{K} \left[\widehat{P}(X|S_j)\widehat{P}(S_j|\ell^-) - P^*(X|S_j)P^*(S_j|\ell^-) \right]\right| \\
        &\leq \sum_{j=1}^{K} \left|\left[\widehat{P}(X|S_j)\widehat{P}(S_j|\ell^-) - P^*(X|S_j)P^*(S_j|\ell^-) \right]\right| \\
        &= \sum_{j=1}^{K} \left|\left[\left(\widehat{P}(X|S_j) - P^*(X|S_j)\right)P^*(S_j|\ell^-) + \left(\widehat{P}(S_j|\ell^-) - P^*(S_j|\ell^-)\right)\widehat{P}(X|S_j) \right]\right| \\
        &\leq \sum_{j=1}^{K} \left|\widehat{P}(X|S_j) - P^*(X|S_j)\right|P^*(S_j|\ell^-) + \sum_{K}^{j=1}\left|\widehat{P}(S_j|\ell^-) - P^*(S_j|\ell^-)\right|\widehat{P}(X|S_j) \\
        &\leq \max_{j}\left\{\left|\widehat{P}(X|S_j) - P^*(X|S_j)\right|\right\} + \max_{j}\left\{\left|\widehat{P}(S_j|\ell^-) - P^*(S_j|\ell^-)\right|\widehat{P}(X|S_j)\right\} \\
        &:= A + B. \\
\end{align}
Therefore,
\begin{align}
\PP\left(\left|\widehat{P}(X|\ell^-) - P^*(X|\ell^-)\right| > \epsilon\right) &\leq \PP\left(A + B > \epsilon\right) \\
    &= \PP\left(A > \epsilon - B\right) \\
    &= \PP\left(A > \epsilon - B | B \leq \delta \right)\PP(B \leq \delta) +  \PP\left(A > \epsilon - B | B > \delta \right)\PP(B > \delta)\\
    &\leq \PP\left(A > \epsilon - \delta\right)\PP(B \leq \delta) +  \PP\left(A > \epsilon - B, B > \delta\right).
\end{align}

For sufficiently large $N$, $\PP\left(A > \epsilon - B, B > \delta\right) = 0$ and $\PP(B \leq \delta) = 1$, given that $\widehat{P}(X|S_j)$ is bounded and $\left|\widehat{P}(S_j|\ell^-) - P^*(S_j|\ell^-)\right|\rightarrow0$. Therefore, given $N$ sufficiently large,  
\begin{align}
\PP\left(\left|\widehat{P}(X|\ell^-) - P^*(X|\ell^-)\right| > \epsilon\right) &\leq \PP\left(A > \epsilon - \delta\right) \\
        &= \PP\left(\max_{j}\left|\widehat{P}(X|S_j) - P^*(X|S_j)\right| > \epsilon - \delta\right) \\
        &\leq 2\exp\left\{-\frac{2(1+N^{*}_{j})^2(\epsilon - \delta)^2}{N K_h(0)^2}\right\} \\
        &\leq 2\exp\left\{-\frac{2(1+N^{*}_{s})^2(\epsilon - \delta)^2}{N K_h(0)^2}\right\},
\end{align}
where the penultimate inequality follows from Lemma~\ref{lem:2}.
\end{proof}

\newpage
\twocolumn
\section*{Appendix II: Implementation Details}

We now discuss the choosing of various parameter settings for the two algorithms, as well as some computational techniques used to improve runtime performance.

\subsection*{Choosing Number of States}

In both mixed LICORS and Moonshine a user must specify the maximum number of predictive states for the model, which effectively controls the complexity of the model. In OHP, one must specify the exact number of predictive states, since the number is determined by a $k$-means++~\citep{arthur2007k} clustering step. In all cases, this number can be chosen based on user preference for simpler models, or cross-validation may be used to find the number of states that gives the best predictive performance on held-out data.

\subsection*{Dimensionality Reduction Choice in Moonshine}

Another parameter that must be chosen is the degree of dimensionality reduction when forming distribution signatures in Moonshine. Data can guide this choice (through cross-validation), or user preference for more compact models can guide the choice for greater degrees of dimensionality reduction. The fewer the number of dimensions, the less discriminative the signatures, and thus, the higher the likelihood of merging clusters.

\subsection*{Density Based Clustering Considerations}

When using density based clustering such as DBSCAN~\citep{DBSCAN}, two issues arise. First, a suitable local neighborhood size must be chosen (controlled by an $\epsilon$ parameter). Second, such methods can be computationally expensive and thus slow. To address the first issue, we take an iterative search approach by beginning with very small neighborhood sizes, then increase them until a significant portion of the data is clustered, but keep the proportion below 100\%. To address the second issue, we use DBSCAN to cluster only a seed portion of all observations, then assign remaining observations to nearest cluster centers, which greatly improves runtime. Controlling the proportion of data used for seeding versus the portion assigned to cluster centers affects the degree of forced convexity of resulting clusters, and also determines the total runtime of the clustering. Fewer seed points results in faster clustering, but with more convex-shaped (e.g., $k$-means-like) clusters.

\subsection*{Scaling}

Since Moonshine and OHP cluster based on distances, it becomes important to normalize the scaling of all axes and dynamic ranges of all experiments. Additionally, if the scale of training light cones differs from the scale of test light cones predictive performance will suffer.

\subsection*{Nonparametric Density Estimation}

Nonparametric density estimation techniques are instance based and slow with increasing numbers of observations. Our algorithms use kernel density estimators~\citep{KDE1,KDE2}, for which we only retain a randomly chosen subsample of five hundred points in each cluster to compute the densities. The resulting systems still perform well, as shown in \S \ref{sec:results}, while being computationally tractable.

\end{document}